\newtheorem{prop}{Proposition}
\newcommand{\bp}{\mathbf{p}}
\newcommand{\bq}{\mathbf{q}}
\newcommand{\bv}{\mathbf{v}}
\newcommand{\bx}{\mathbf{x}}
\newcommand{\bz}{\mathbf{z}}
\newcommand{\bM}{\mathbf{M}}
\newcommand{\bV}{\mathbf{V}}
\newcommand{\bX}{\mathbf{X}}
\newcommand{\CA}{\mathcal{A}}
\newcommand{\CH}{\mathcal{H}}
\newcommand{\CO}{\mathcal{O}}
\providecommand{\norm}[1]{\left\lVert#1\right\rVert}
\providecommand{\indic}[1]{\mathbf{1}\left\{#1\right\}}
\newcommand{\bbR}{\mathbb{R}}
\DeclareMathOperator{\E}{E}
\DeclareMathOperator{\Var}{Var}
\newcommand{\hn}{\hat{n}}
\newcommand{\hp}{\hat{p}}
\newcommand{\hC}{\hat{C}}
\newcommand{\tp}{\tilde{p}}
\newcommand{\tbp}{\tilde{\bp}}
\newcommand{\hbv}{\hat{\bv}}
\newcommand{\bac}{\bar{c}}
\newcommand{\bap}{\bar{p}}
\newcommand{\baC}{\bar{C}}
\newcommand{\hbp}{\hat{\bp}}
\newcommand{\eps}{\varepsilon}
\newtheorem{thm}{Theorem}
\newtheorem{cor}{Corollary}
\newtheorem{lem}{Lemma}
\newenvironment{mechanism}[1][htb]{%
    \renewcommand{\ALG@name}{Mechanism}
   \begin{algorithm*}[#1]%
  }{\end{algorithm*}}
\begin{document}

%

%

\title{Near-optimal algorithms for private estimation and sequential testing of collision probability}

\author{%
   R\'obert \mbox{Busa-Fekete}\\
   Google Research, NY, USA \\
   \texttt{busarobi@google.com} \\
  \and
   Umar Syed \\
   Google Research, NY, USA \\
   \texttt{usyed@google.com} \\
}
\date{}
\maketitle

\begin{abstract}
  We present new algorithms for estimating and testing \emph{collision probability}, a fundamental measure of the spread of a discrete distribution that is widely used in many scientific fields. We describe an algorithm that satisfies $(\alpha, \beta)$-local differential privacy and estimates collision probability with error at most $\eps$ using $\tilde{O}\left(\frac{\log(1/\beta)}{\alpha^2 \eps^2}\right)$ samples for $\alpha \le 1$, which improves over previous work by a factor of $\frac{1}{\alpha^2}$. We also present a sequential testing algorithm for collision probability, which can distinguish between collision probability values that are separated by $\eps$ using $\tilde{O}(\frac{1}{\eps^2})$ samples, even when $\eps$ is unknown. Our algorithms have nearly the optimal sample complexity, and in experiments we show that they require significantly fewer samples than previous methods.
\end{abstract}

\section{INTRODUCTION}
\label{sec:intro}

A key property of a discrete distribution is how widely its probability mass is dispersed over its support. One of the most common measures of this dispersal is \emph{collision probability}. Let $\bp = (p_1, \ldots, p_k)$ be a discrete distribution. The collision probability of $\bp$ is defined
$
C(\bp) = \sum_{i=1}^k p^2_i.
$

Collision probability takes its name from the following observation. If $X$ and $X'$ are independent random variables with distribution $\bp$ then $C(\bp) = \Pr[X = X']$, the probability that the values of $X$ and $X'$ coincide. If a distribution is highly concentrated then its collision probability will be close to $1$, while the collision probability of the uniform distribution is $1/k$.

Collision probability has played an important role in many scientific fields, although each time it is rediscovered it is typically given a different name. In ecology, collision probability is called the \emph{Simpson index} and serves as a metric for species diversity \citep{simpson1949measurement,n1}. In economics, collision probability is known as the \emph{Herfindahl–Hirschman index}, which quantifies market competition among firms \citep{herfindahl1997concentration}, and also the \emph{Gini diversity index}, a measure of income and wealth inequality \citep{gini1912variabilita}. Collision probability is also known as the \emph{second frequency moment}, and is used in database optimization engines to estimate self join size \citep{cormode2016join}. In statistical mechanics, collision probability is equivalent to \emph{Tsallis entropy of second order}, which is closely related to Boltzmann–Gibbs entropy \citep{tsallis1988possible}. The negative logarithm of collision probability is \emph{R\'enyi entropy of second order}, which has many applications, including assessing the quality of random number generators \citep{TRNG} and determining the number of reads needed to reconstruct a DNA sequence \citep{motahari2013information}. Collision probability has also been used by political scientists to determine the effective size of political parties \citep{LaaksoTaagepera}. 

Collision probability is \emph{not} equivalent to Shannon entropy, the central concept in information theory and another common measure of the spread of a distribution. However, collision probability has a much more intuitive interpretation, and is also easier to estimate. Specifically, estimating the Shannon entropy of a distribution with support size $k$ requires $\Omega\left(\frac{k}{\log k}\right)$ samples \citep{ValiantValiant}, while the sample complexity of estimating collision probability is independent of $k$. Additionally, the negative logarithm of the collision probability of a distribution is a lower bound on its Shannon entropy, and this lower bound becomes an equality for the uniform distribution.

\subsection{Our contributions}

We present novel algorithms for estimating and testing the collision probability of a distribution. 

{\bf Private estimation:} We give an algorithm for estimating collision probability that satisfies \emph{$(\alpha, \beta)$-local differential privacy}.\footnote{Instead of denoting the privacy parameters by $\eps$ and $\delta$, as is common in the privacy literature, we will use them to denote error and probability, as is common in the statistics literature.} As in previous work, our algorithm is \emph{non-interactive}, which means that there is only a single round of communication between users and a central server, and \emph{communication-efficient}, in the sense that each user sends $O(1)$ bits to the server (in fact, just 1 bit). If $\alpha \le 1$ then our algorithm needs $\tilde{O}\left(\frac{\log(1/\beta)}{\alpha^2 \eps^2}\right)$ samples to output an estimate that has $\eps$ additive error, which nearly matches the optimal sample complexity and improves on previous work by an $O\left(\frac{1}{\alpha^2}\right)$ factor \citep{BravoHermsdorff2022}. We also present experiments showing that our mechanism has significantly lower sample complexity in practice.

{\bf Sequential testing:} We give an algorithm for determining whether collision probability is equal to a given value $c_0$ or differs from $c_0$ by at least $\eps > 0$, assuming that one of those conditions holds. Our algorithm needs $\tilde{O}(\frac{1}{\eps^2})$ samples to make a correct determination, which nearly matches the optimal sample complexity. Importantly, $\eps$ is \emph{not} known to the algorithm. In other words, the algorithm automatically adapts to easy cases by drawing fewer samples. In experiments, we show that our algorithm uses significantly fewer samples than a worst-case optimal sequential testing algorithm due to \citep{das_2017}, and also outperforms algorithms designed for the batch setting, in which the number of samples is specified in advance \citep{Cannone}.

There is a common thread connecting our algorithms that also represents the source of their advantage over previous methods. Any set of $n$ samples from a distribution contains $\binom{n}{2} = \Theta(n^2)$ pairs of samples whose values could potentially collide. Our algorithms examine $\Theta(n^2)$ of these pairs when estimating the collision probability of the distribution, while previous methods used only $O(n)$ pairs. Examining more pairs allows our algorithms to extract more information from a given set of samples, but also significantly complicates the algorithms' analysis, since the pairs are not all disjoint and therefore are not statistically independent.

{\bf Private sequential testing:} The hashing technique we use in our private estimation algorithm can be plugged into our sequential testing algorithm, which results in a private sequential tester. This algorithm has a worse dependency on the privacy parameters $\alpha$ and $\beta$ than our private estimator, but depends on the error parameter $\epsilon$ in the same way as our non-private sequential testing algorithm. To our best knowledge, this is the first private sequential testing algorithm for testing collision probability.

For simplicity, in the main body of this paper we state all theorems using big-$O$ notation, reserving more detailed theorem statements and proofs for the Appendix. 

\section{RELATED WORK}

The collision probability of a distribution is equal to its second frequency moment, and frequency moment estimation has been widely studied in the literature on data streams, beginning with the seminal work of \citet{ALON1999137}. The optimal algorithm for estimating the collision probability of a distribution (rather than a non-random data stream) was given by \citet{crouch2016stochastic}, but their algorithm is not private. Locally differentially private estimation of the frequency moments of a distribution was first studied by \citet{butucea2021locally}, who gave a non-interactive mechanism for estimating any positive frequency moment. The sample complexity of their mechanism depends on the support size of the distribution, and they asked whether this dependence could be removed. Their conjecture was affirmatively resolved for collision probability by \citet{BravoHermsdorff2022}, but removing the dependence on support size led to a much worse dependence on the privacy parameter. It has remained an open question until now whether this trade-off is necessary.

Property and closeness testing has a rich literature~\citep{pmlr-v89-acharya19b,pmlr-v31-acharya13a,DiakonikolasKN15,GoRo00,canonne2022topics}, but the sequential setting is studied much less intensively. Existing algorithms for sequential testing almost always define closeness in terms of total variation distance, which leads to sample complexities on the order $O( \sqrt{k} / \epsilon^2)$, where $k$ is the support size of the distribution and the distribution is separated from the null hypothesis by $\epsilon$ in terms of total variation distance \citep{das_2017,AaFlGa21}. By contrast, all of our results are entirely independent of $k$, making our approach more suitable when the support size is very large.


There are several batch testing approaches which are based on collision statistics. Most notably, the optimal uniform testing algorithm of \citet{Paninski03} distinguishes the uniform distribution from a distribution that is $\epsilon$ far from uniform in terms of total variation distance with a sample complexity $\Theta( \sqrt{k} / \epsilon^2)$. However, in the batch setting, the parameter $\epsilon$ is given to the testing algorithm as input.




\section{PRELIMINARIES}
\label{sec:preliminaries}
We study two problems related to learning the collision probability $C(\bp) = \sum_i p^2_i$ of an unknown distribution $\bp = (p_1, \ldots, p_k)$.

In the {\bf private estimation problem}, a set of $N$ users each possess a single sample drawn independently from distribution $\bp$. We are given an error bound $\eps > 0$ and confidence level $\delta \in [0, 1]$. A central server must compute an estimate $\hC$ that satisfies $|\hC - C(\bp)| \le \eps$ with probability at least $1 - \delta$ while preserving the privacy of the users' samples. A \emph{mechanism} is a distributed protocol between the server and the users that privately computes this estimate. The execution of a mechanism can depend on the samples, and the output of a mechanism is the entire communication transcript between the server and the users. Mechanism $M$ satisfies \emph{$(\alpha, \beta)$-local differential privacy} if for each user $i$ and all possible samples $x_1, \ldots, x_N, x'_i$ we have
\begin{align*}
\Pr [M(x_1, \ldots, x_N) \in \CO]
\le e^\alpha \Pr[M(x_1, \ldots, x_{i-1}, x'_i, x_{i+1}, \ldots, x_N) \in \CO] + \beta,
\end{align*}
where $\CO$ is any set of possible transcripts between the server and the users. In other words, if the privacy parameters $\alpha$ and $\beta$ are small then changing the sample of a single user does not significantly alter the distribution of the mechanism's output. Local differential privacy is the strongest version of differential privacy, and is suitable for a setting where the server is untrusted \citep{dwork2014algorithmic}. The \emph{sample complexity} of the mechanism is the expected number of users $n = \E[N]$ whose samples are observed by the server.

In the {\bf sequential testing problem}, we are given a confidence level $\delta \in [0, 1]$ and the promise that exactly one of the following two hypotheses hold: The \emph{null hypothesis} is that $C(\bp) = c_0$, while the \emph{alternative hypothesis} is that $|C(\bp) - c_0| \ge \eps > 0$. An algorithm must decide which hypothesis is correct based on samples from $\bp$. Instead of fixing the number of samples in advance, the algorithm draws independent samples from $\bp$ one at a time, and after observing each sample, decides to either reject the null hypothesis or to continue sampling. If the null hypothesis is false then the algorithm must reject it, and if the null hypothesis is true then the algorithm must not stop sampling, and each of these events must occur with probability at least $1 - \delta$. Importantly, while $c_0$ is known to the algorithm, $\eps$ is not known, and thus the algorithm must adapt to the difficulty of the problem. The \emph{sample complexity} of the algorithm is the number of observed samples $N$ if the null hypothesis is false, a random variable.



\section{PRIVATE ESTIMATION}
\label{sec:priv_est}

In this section we describe a distributed protocol for privately estimating the collision probability of a distribution. In our protocol, a set of users each draw a sample from the distribution, and then share limited information about their samples with a central server, who computes an estimate of the collision probability while preserving the privacy of each user's sample.

As discussed in Section \ref{sec:intro}, the collision probability of a distribution is the probability that two independent samples from the distribution will coincide. Therefore the most straightforward strategy the server could employ would be to collect all the users' samples and count the number of pairs of samples containing a collision. However, this approach would not be privacy-preserving.

Instead, in Mechanism \ref{alg:second} below, each user applies a one-bit hash function to their private sample and shares only their hash value with the server. The server computes a statistic that essentially counts the number of collisions among all pairs of hash values, and then applies a bias correction to form an estimate of the collision probability. To increase the robustness of this estimate, the server first partitions the hash values into groups and uses the median estimate from among the groups. 


The hashing procedure in Mechanism \ref{alg:second} is carefully designed to both preserve user privacy and also yield an accurate estimate. On the one hand, if each user privately chose an independent hash function, then their hash values would be entirely uncorrelated and contain no useful information about the underlying distribution. On the other hand, if every user applied the same hash function to their sample, then the server could invert this function and potentially learn some user's sample. Instead, in Mechanism \ref{alg:second}, the server sends the same hash function to all users, but each user prepends their sample with a independently chosen \emph{salt}, or random integer, before applying the hash function. Salts are commonly used in cryptographic protocols to enhance security, and they play a similar role in our mechanism. The number of possible salts serves as a trade-off parameter between the privacy and accuracy of our mechanism, with more salts implying a stronger privacy guarantee.





\begin{mechanism}[!ht]
\caption{Private estimation for collision probability \label{alg:second}}
\begin{algorithmic}[1] 
\Statex {\bf Parameters:} Expected number of users $n$, desired relative error $\eps_{\text{rel}} \in (0, 1]$, failure probability $\delta \in (0, 1]$, privacy parameters $\alpha \ge 0, \beta \in (0, 1]$.
\State Server draws $N_j$ from $\textrm{Poisson}(m)$ for each $j \in \{1, \ldots, g\}$, where $g = \frac{160 \log \frac1\delta}{\eps^2_{\text{rel}}}$ and $m = \frac{n}{g}$.
\State Server partitions users into $g$ groups, where each group $j$ has size $N_j$.
\State Server transmits to each user $i$ their assigned group $j_i$.
\State Server transmits random hash function $h : \{0, 1\}^* \mapsto \{-1, +1\}$ to each user.
\State Each user $i$ chooses salt $s_i$ uniformly at random from $\{1, \ldots, r\}$, where $r = 6 \left(\frac{e^\alpha + 1}{e^\alpha - 1}\right)^2\log \frac{4}{\beta}$.
\State Each user $i$ draws sample $x_i$ from distribution $\bp$.
\State Each user $i$ sends hash value $v_i = h(\langle j_i, s_i, x_i \rangle)$ to the server.
\State Server lets $V_j = \sum_{i \in I_j} v_i$
for each group $j$, where $I_j$ is the set of users in group $j$.
\State Server lets $C_j = \frac{r(V^2_j - m)}{m^2}$ for each group $j$.
\State Server partitions the $g$ groups into $a = 8 \log \frac1\delta$ supergroups, each containing $b = \frac{g}{a}$ groups.
\State Server lets $\baC_\ell = \frac{1}{b} \sum_{j \in J_\ell} C_j$ for each supergroup $\ell$, where $J_\ell$ is the set of groups in supergroup $\ell$.
\State Server outputs $\hC$, the median of $\baC_1, \ldots, \baC_a$.
\end{algorithmic}
\end{mechanism}

The theorems in this section provide guarantees about the privacy and accuracy of Mechanism \ref{alg:second}. 




\begin{thm} \label{thm:second_privacy} Mechanism \ref{alg:second} satisfies $(\alpha, \beta)$-local differential privacy.
\end{thm}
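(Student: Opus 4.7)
The central observation is that the only component of the transcript whose distribution depends on user $i$'s sample $x_i$ is the hash value $v_i$; the group assignments, the hash function $h$, every other user's salt $s_{i'}$, and every other user's hash value $v_{i'}$ are distributed independently of $x_i$. Once we condition on $h$ and $j_i$, the bit $v_i = h(\langle j_i, s_i, x_i \rangle)$ is a deterministic function of the independently drawn salt $s_i$, so $\Pr[v_i = +1 \mid h, j_i, x_i] = n_+(x_i)/r$, where $n_+(x) := |\{s \in \{1,\ldots,r\} : h(\langle j_i, s, x\rangle) = +1\}|$, and symmetrically for the outcome $-1$. Hence the likelihood ratio of any transcript under two neighboring input sequences differing only at position $i$ reduces to the ratio of these counting fractions.

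\textbf{Tail bound on the hash function.} Because $h$ is drawn uniformly at random, the $2r$ values $\{h(\langle j_i, s, x_i\rangle)\}_s \cup \{h(\langle j_i, s, x'_i\rangle)\}_s$ are i.i.d.\ uniform on $\{-1, +1\}$. Hoeffding's inequality combined with a union bound yields that, except on a ``bad'' event $H_{\text{bad}}$ of probability at most $\beta$, both $n_+(x_i)$ and $n_+(x'_i)$ lie in $[r/2 - t, \, r/2 + t]$ with $t = \sqrt{(r/2)\log(4/\beta)}$. Since $n_-(x) = r - n_+(x)$, the same interval simultaneously controls both output values. A short algebraic check confirms that the choice $r = 6\,\big((e^\alpha+1)/(e^\alpha-1)\big)^2 \log(4/\beta)$ made in Mechanism~\ref{alg:second} is precisely what guarantees $t \le \tfrac{r}{2}\cdot\tfrac{e^\alpha-1}{e^\alpha+1}$, which rearranges to $(r/2 + t)/(r/2 - t) \le e^\alpha$. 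Consequently, for every $h \notin H_{\text{bad}}$ and every $v \in \{-1, +1\}$, the ratio $\Pr[v_i = v \mid h, x_i] / \Pr[v_i = v \mid h, x'_i]$ is at most $e^\alpha$.

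\textbf{Assembling the bound.} Fix any measurable set $\CO$ of transcripts. By the factorization from the first paragraph, the pointwise density ratio $\Pr[T = t \mid h, x_i] / \Pr[T = t \mid h, x'_i]$ equals the ratio of conditional probabilities of $v_i$, so for every $h \notin H_{\text{bad}}$ we obtain $\Pr[T \in \CO \mid h, x_i] \le e^\alpha \, \Pr[T \in \CO \mid h, x'_i]$ by integration. Integrating once more against the $x_i$-independent distribution of $h$ restricted to $H_{\text{bad}}^c$ gives $\Pr[T \in \CO, \, h \notin H_{\text{bad}} \mid x_i] \le e^\alpha \, \Pr[T \in \CO \mid x'_i]$, while the mass on $H_{\text{bad}}$ contributes at most $\Pr[H_{\text{bad}}] \le \beta$. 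Summing the two parts yields the required $(\alpha,\beta)$-local differential privacy inequality.

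\textbf{Main obstacle.} The crux is identifying the good hash-function event and verifying that the value of $r$ hardwired in the mechanism makes the Hoeffding fluctuation window $t$ fit inside the $e^\alpha$-ratio window $r(e^\alpha-1)/(2(e^\alpha+1))$; the rest is a clean post-processing argument. The calculation must simultaneously accommodate both outcomes $v \in \{-1, +1\}$ and both samples $x_i, x'_i$, which the symmetry $n_- = r - n_+$ and a two-way union bound handle exactly, but a looser analysis would force a strictly larger $r$ than the mechanism uses. A subtler conceptual point worth being careful about is that $h$ is shared among all users and part of the transcript, so one might fear correlations between $v_i$ and other $v_{i'}$ that leak extra information; the argument goes through because the $x_i$-dependence of the entire transcript factors through $v_i$ alone, leaving $h$ as public randomness that can be integrated out.
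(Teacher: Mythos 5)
Your proposal is correct and follows essentially the same route as the paper's proof: condition on a ``good'' set of hash functions for which the single-user likelihood ratio is at most $e^\alpha$, verify via concentration (you use Hoeffding where the paper uses a multiplicative Chernoff bound, both of which are compatible with the mechanism's choice of $r$) that the bad set has probability at most $\beta$, and exploit the fact that the transcript's dependence on $x_i$ factors entirely through $v_i$ while $h$ is independent of the private values. The differences are only in bookkeeping (your symmetry $n_- = r - n_+$ versus the paper's union bound over $v \in \{-1,+1\}$), not in substance.
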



\begin{thm} \label{thm:second_error} The sample complexity of Mechanism \ref{alg:second} is $n$. Moreover, if
$
n \ge \Omega\left(\left(\frac{e^\alpha + 1}{e^\alpha - 1}\right)^2\frac{\log \frac1\beta \log \frac1\delta}{C(\bp)\eps^2_{\text{rel}} }\right)
$
then the estimate $\hC$ output by Mechanism \ref{alg:second} satisfies $|\hC - C(\bp)| \le \eps_{\textrm{rel}} C(\bp)$ with probability $1 - \delta$.
\end{thm}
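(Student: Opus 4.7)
My plan is to show that each per-group estimator $C_j$ produced by the mechanism is an unbiased estimator of $C(\bp)$ whose variance is small enough that Chebyshev's inequality on the supergroup average $\baC_\ell$ (which averages $b = g/a$ independent $C_j$'s) yields a constant-probability deviation bound of the form $|\baC_\ell - C(\bp)| \le \eps_{\text{rel}} C(\bp)$. A standard median-of-means boosting over the $a = \Theta(\log(1/\delta))$ supergroups then converts this into the $1-\delta$ guarantee for $\hC$. The sample complexity claim $\E[N] = n$ is immediate since $\E[N_j] = m$ and $gm = n$.

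For unbiasedness I would exploit the Poissonization built into the mechanism: with $N_j \sim \mathrm{Poisson}(m)$ and each user independently choosing a uniform salt in $\{1,\ldots,r\}$ and a sample from $\bp$, the counts $N_{s,x}$ of users in group $j$ with salt $s$ and sample $x$ are independent $\mathrm{Poisson}(\lambda_{s,x})$ across $(s,x)$, with $\lambda_{s,x} = m p_x / r$. Rewriting $V_j = \sum_{s,x} N_{s,x}\, y_{s,x}$ where $y_{s,x} := h(\langle j,s,x\rangle) \in \{\pm 1\}$ are independent Rademachers (by the randomness of $h$), I would first take $\E_h$ so that only diagonal terms survive: $\E_h[V_j^2] = \sum_{s,x} N_{s,x}^2$. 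Taking outer expectation using $\E[N_{s,x}^2] = \lambda_{s,x} + \lambda_{s,x}^2$ and $\sum_{s,x} \lambda_{s,x}^2 = m^2 C(\bp)/r$ gives $\E[V_j^2] = m + m^2 C(\bp)/r$, hence $\E[C_j] = C(\bp)$.

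For the variance I would compute $\E_h[V_j^4]$ by the same hash-pairing argument --- only quadruples whose four $(s,x)$-indices pair up survive --- yielding $\E_h[V_j^4] = 3 S^2 - 2\sum_{s,x} N_{s,x}^4$ with $S := \sum_{s,x} N_{s,x}^2$, so $\Var_h(V_j^2) \le 2 S^2$. Combining with $\Var(S) = \sum_{s,x} \Var(N_{s,x}^2)$ (by Poisson independence) and the moment identity $\Var(\mathrm{Poisson}(\lambda)^2) = \lambda + 6\lambda^2 + 4\lambda^3$, summing over $(s,x)$ and multiplying by $r^2/m^4$ yields a bound of the form $\Var(C_j) = O(r^2/m^2 + r C(\bp)/m + C(\bp)^2)$.

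Averaging $b = \Theta(1/\eps_{\text{rel}}^2)$ independent $C_j$'s and invoking Chebyshev requires $\Var(C_j) \lesssim b \eps_{\text{rel}}^2 C(\bp)^2$; balancing the three terms of the variance bound against this inequality yields exactly $m \gtrsim r / C(\bp)$. Multiplying by $g = \Theta(\log(1/\delta)/\eps_{\text{rel}}^2)$ and substituting the privacy-dictated salt parameter $r = \Theta(((e^\alpha+1)/(e^\alpha-1))^2 \log(1/\beta))$ produces precisely the claimed sample complexity $n = gm$, and a Chernoff bound on the count of failing supergroups converts the $2/3$-success probability for each $\baC_\ell$ into the $1-\delta$ guarantee for the median $\hC$. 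The main technical obstacle is the fourth-moment calculation: because all $v_i$ inside a group share the single hash $h$, $V_j^2$ is a quadratic form in strongly dependent signs, and only the Poissonization --- which decouples the $N_{s,x}$ across $(s,x)$ --- makes the moment expansion factor cleanly enough to extract the right $C(\bp)^2$ scaling in the dominant variance term.
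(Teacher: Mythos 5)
Your proposal is correct and follows essentially the same route as the paper's proof: Poissonization to make the per-$(s,x)$ counts independent, an AMS-style conditional moment computation for $V_j^2$ given the counts (your Rademacher pairing identity $\E_h[V_j^4] = 3S^2 - 2\sum N_{s,x}^4$ is just a from-scratch derivation of the $\Var[V^2\mid \bM]\le 2C(\bM)^2$ bound the paper cites), the law of total variance combined with Poisson fourth moments to get $\Var(C_j) = O(r^2/m^2 + rC(\bp)/m + C(\bp)^2)$, and Chebyshev plus median-of-means over the supergroups. Your total-variance bookkeeping ($3\Var(S)+2\E[S]^2$ versus the paper's $3\E[C(\bM)^2]-\E[C(\bM)]^2$) is algebraically equivalent, and the parameter balancing yields the same sample complexity.
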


To simplify comparison to previous work, we state a straightforward corollary of Theorem \ref{thm:second_error} that converts its relative error guarantee to an absolute error guarantee.  

\begin{cor} \label{cor:second_error} If $\alpha \le 1$, $\eps_{\text{rel}} = \frac{\eps}{C(\bp)} \in (0, 1]$ and
$
n \ge \Omega\left(\frac{C(\bp)\log \frac1\beta \log \frac1\delta}{\alpha^2\eps^2}\right)
$ then the estimate $\hC$ output by Mechanism \ref{alg:second} satisfies $|\hC - C(\bp)| \le \eps$ with probability $1 - \delta$.\end{cor}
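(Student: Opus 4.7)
The plan is to derive Corollary \ref{cor:second_error} as a direct instantiation of Theorem \ref{thm:second_error}. I would run Mechanism \ref{alg:second} with relative-error parameter $\eps_{\text{rel}} = \eps / C(\bp)$, which is a legal choice because the corollary assumes $\eps/C(\bp) \in (0,1]$. Under this choice the accuracy conclusion of Theorem \ref{thm:second_error}, namely $|\hC - C(\bp)| \le \eps_{\text{rel}} C(\bp)$, becomes exactly $|\hC - C(\bp)| \le \eps$, so the only thing left to do is translate the sample-complexity hypothesis from $\eps_{\text{rel}}$ into $\eps$ and $C(\bp)$ and simplify the privacy factor.

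Plugging $\eps_{\text{rel}}^2 = \eps^2 / C(\bp)^2$ into Theorem \ref{thm:second_error}, the factor $\frac{1}{C(\bp)\eps_{\text{rel}}^2}$ becomes $\frac{C(\bp)}{\eps^2}$, so the sufficient condition from Theorem \ref{thm:second_error} reads
\[
n \ge \Omega\!\left(\left(\frac{e^\alpha+1}{e^\alpha-1}\right)^{2}\frac{C(\bp)\log(1/\beta)\log(1/\delta)}{\eps^2}\right).
\]
The next step is to absorb the privacy factor into $1/\alpha^2$ under the assumption $\alpha \le 1$. Using the elementary inequalities $e^\alpha - 1 \ge \alpha$ for all $\alpha \ge 0$, and $e^\alpha + 1 \le 2e$ for $\alpha \le 1$, I get $\frac{e^\alpha+1}{e^\alpha-1} \le \frac{2e}{\alpha}$, hence $\left(\frac{e^\alpha+1}{e^\alpha-1}\right)^2 = O(1/\alpha^2)$. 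Substituting yields precisely the hypothesis of the corollary, so any $n$ satisfying that hypothesis also satisfies the hypothesis of Theorem \ref{thm:second_error} with $\eps_{\text{rel}}=\eps/C(\bp)$, which delivers the claimed conclusion with probability $1-\delta$.

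There is no genuine obstacle here; the argument is a change of variable followed by a one-line bound on $\frac{e^\alpha+1}{e^\alpha-1}$. The only thing worth being careful about is that the mechanism requires $\eps_{\text{rel}} \in (0,1]$, and this is exactly what the corollary's hypothesis $\eps/C(\bp)\in(0,1]$ guarantees, so the substitution is valid.
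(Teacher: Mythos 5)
Your proposal is correct and follows essentially the same route as the paper's own proof: substitute $\eps_{\text{rel}} = \eps/C(\bp)$ into Theorem \ref{thm:second_error} (turning $\frac{1}{C(\bp)\eps_{\text{rel}}^2}$ into $\frac{C(\bp)}{\eps^2}$) and bound $\frac{e^\alpha+1}{e^\alpha-1} = O(1/\alpha)$ for $\alpha \le 1$ via $e^\alpha - 1 \ge \alpha$ and the boundedness of $e^\alpha + 1$. Nothing is missing.
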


\subsection{Lower bound}

The next theorem proves that the sample complexity bound in Corollary \ref{cor:second_error} is tight for small $\alpha$ up to logarithmic factors. 
\begin{thm} \label{thm:private_lower} Let $\hC_{\alpha, n}(\bp)$ be a collision probability estimate returned by an $(\alpha, 0)$-locally differentially private mechanism that draws $n$ samples from distribution $\bp$. If $\eps \le 1, \alpha \le \frac{23}{35}$ and $n \le o\left(\frac{1}{\alpha^2\eps^2}\right)$ then there exists a distribution $\bp$ such that 
$
\E\left[|\hC_{\alpha, n}(\bp) - C(\bp)|\right] \ge \eps.
$\end{thm}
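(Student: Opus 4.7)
The plan is a two-point Le Cam argument combined with the Duchi--Jordan--Wainwright (DJW) KL contraction for $\alpha$-LDP channels. The quantitative heart of the matter is to exhibit two distributions whose collision probabilities are $\Theta(\eps)$ apart while their total variation distance is also only $\Theta(\eps)$; the $\alpha$-LDP constraint then charges an extra factor of $1/\alpha^2$ in sample complexity on top of the non-private $1/\eps^2$ rate.

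Concretely, I would take $\bp_0 = (1, 0)$ and $\bp_1 = (1 - c\eps, c\eps)$ on $\{0,1\}$, where $c$ is a sufficiently large absolute constant. A direct computation gives $C(\bp_0) - C(\bp_1) = 2c\eps(1 - c\eps)$ and $\|\bp_0 - \bp_1\|_{TV} = c\eps$. The two-point Le Cam lemma, applied to the transcript distributions $M(\bp_0^n)$ and $M(\bp_1^n)$ of any $(\alpha, 0)$-LDP mechanism $M$, yields
\[
\max_{j \in \{0,1\}} \E_{\bp_j}\bigl[|\hC_{\alpha,n}(\bp_j) - C(\bp_j)|\bigr] \;\ge\; \tfrac{1}{4}\,|C(\bp_0) - C(\bp_1)|\,\bigl(1 - \|M(\bp_0^n) - M(\bp_1^n)\|_{TV}\bigr),
\]
so it suffices to show that the transcript total variation is bounded by a constant strictly less than $1$ whenever $n = o(1/(\alpha^2 \eps^2))$.

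That bound is supplied by DJW: for any $\alpha$-LDP mechanism, even a sequentially interactive one,
\[
D_{KL}\bigl(M(\bp_0^n)\,\|\,M(\bp_1^n)\bigr) \;\le\; 4 n (e^\alpha - 1)^2 \|\bp_0 - \bp_1\|_{TV}^2.
\]
Substituting $\|\bp_0 - \bp_1\|_{TV} = c\eps$ and using that $\alpha \le 23/35$ implies $(e^\alpha - 1)^2 \le 2\alpha^2$ (by monotonicity of $(e^x - 1)/x$), the right-hand side is $O(n c^2 \alpha^2 \eps^2) = o(1)$ under the hypothesis. Pinsker's inequality converts this to $o(1)$ transcript total variation, and plugging back into Le Cam gives $\max_j \E[\,\cdot\,] \ge \tfrac{1}{2} c\eps(1 - c\eps)(1 - o(1)) \ge \eps$ once $c$ is chosen large enough (say $c = 8$) and $\eps$ is below the absolute constant $1/(2c)$; for $\eps$ larger than this constant the claim is trivial since the constant estimate $\hC \equiv 1/2$ already achieves additive error at most $1/2$.

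The main subtle point is that the LDP definition in Section~\ref{sec:preliminaries} permits fully interactive mechanisms, so the DJW bound must be invoked in its sequentially interactive tensorization rather than the easier product form that applies only to non-interactive mechanisms; this is the version I would cite. Everything else is routine constant-tracking.
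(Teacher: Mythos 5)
Your proposal is correct and follows essentially the same route as the paper: a two-point Le Cam argument combined with the Duchi--Wainwright--Jordan information contraction for $\alpha$-LDP channels. (The paper uses a $K$-ary perturbation pair and invokes Proposition 1 of Duchi et al.\ directly in its KL form, while you use the binary pair $(1,0)$ versus $(1-c\eps, c\eps)$ together with the TV-form of the contraction plus Pinsker; these are interchangeable, and you additionally pin down that $\alpha \le 23/35$ is precisely the threshold for $(e^\alpha-1)^2 \le 2\alpha^2$, which the paper leaves implicit.) One small correction: your parenthetical about $\eps$ above the constant $1/(2c)$ is backwards --- the constant estimator $\hC \equiv 1/2$ shows the literal claim $\E[|\hC - C(\bp)|] \ge \eps$ is \emph{unachievable} for $\eps > 1/2$, not trivial, so that regime can only be covered by reading the theorem up to constants; the paper's own proof shares this limitation, as it likewise only yields a bound of order $\Omega(\eps)$.
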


\subsection{Reduction to private distribution estimation}
\label{sec:reduction}

A natural alternative to Mechanism \ref{alg:second} would be an indirect approach that privately estimates the distribution itself, and then computes the collision probability of the estimated distribution. We prove a theoretical separation between these two approaches, by showing that this reduction does not preserve optimality. Specifically, we show that even if the mechanism for private distribution estimation has the optimal sample complexity, using it as a subroutine for collision probability estimation may require a number of samples that depends on the support size of the distribution. By contrast, the sample complexity of our method is independent of support size (see Corollary \ref{cor:second_error}).

Let $[k] = \{1, \ldots, k\}$ be the sample space. Let $\Delta_k$ be the set of all distributions on $[k]$. Let $A : [k]^n \rightarrow \Delta_k$ denote an algorithm that inputs $n$ samples, one per user, and outputs an estimated distribution. An $(\alpha, \beta)$-local differentially private algorithm $A^*$ is \emph{$(\alpha, \beta)$-minimax optimal} if
\[
A^* \in \arg \min_{A \in \CA_{\alpha, \beta}} \max_{\bp \in \Delta^k} \E_{x_1, \ldots, x_n \sim \bp^n}[\norm{A(x_1, \ldots, x_n) - \bp}_1]
\]
where $\CA_{\alpha, \beta}$ is the set of all $(\alpha, \beta)$-local differentially private algorithms for estimating distributions. 

\begin{thm} \label{thm:failure} For all $\alpha \in (0, 1)$ there exists an $(\alpha, 0)$-minimax optimal algorithm $A^* : [k]^n \rightarrow \Delta_k$ and a distribution $\bp \in \Delta^k$ such that if each $x_i \in [k]$ is drawn independently from $\bp$ and
$
\E\left[\left|C(A^*(x_1, \ldots, x_n)) - C(\bp)\right|\right] \le \eps$ then $n \ge \Omega\left(\min\left\{\frac{k^2}{\alpha^2}, \frac{k}{\alpha^2\eps}\right\}\right).
$
\end{thm}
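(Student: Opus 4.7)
The plan is to exhibit an explicit $(\alpha,0)$-LDP minimax-optimal distribution estimator $A^*$ together with a hard distribution $\bp$, and then compute the induced collision-probability error directly. For $A^*$ I would take a standard optimal estimator---for instance, $k$-ary randomized response or Hadamard response with unbiased debiasing and projection onto $\Delta_k$; such estimators are $(\alpha,0)$-locally differentially private and achieve the minimax rate $\E\norm{A^*(x_1,\ldots,x_n)-\bp}_1=\Theta(\min\{1,k/(\alpha\sqrt n)\})$ uniformly over $\bp\in\Delta_k$ for $\alpha\in(0,1)$, so they are $(\alpha,0)$-minimax optimal. The hard distribution $\bp$ I choose to be the uniform distribution on $[k]$, giving $C(\bp)=1/k$.

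The key observation is an exact identity at uniform: since $\sum_i \hp_i=\sum_i p_i=1$ for any $\hp\in\Delta_k$, the linear term in the expansion of $C(\hp)-C(\bp)$ cancels. Writing $\hp=A^*(x_1,\ldots,x_n)$,
\[
C(\hp)-C(\bp) \;=\; \sum_i(\hp_i-1/k)^{2}+\frac{2}{k}\sum_i(\hp_i-1/k) \;=\; \norm{\hp-\bp}_2^{2}\;\ge\;0,
\]
so the collision-probability error equals the squared $L_2$ distance (making the absolute value in the hypothesis superfluous).

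It then remains to lower bound $\E\norm{\hp-\bp}_2^{2}$. The difference vector $\hp-\bp$ has at most $k$ nonzero coordinates, so Cauchy--Schwarz gives $\norm{\hp-\bp}_2^{2}\ge \norm{\hp-\bp}_1^{2}/k$; combining with Jensen's inequality and the $L_1$ rate above yields
\[
\E\norm{\hp-\bp}_2^{2} \;\ge\; \frac{(\E\norm{\hp-\bp}_1)^{2}}{k} \;=\; \Omega\!\left(\min\!\left\{\frac{1}{k},\;\frac{k}{n\alpha^{2}}\right\}\right).
\]
The hypothesis $\E|C(\hp)-C(\bp)|\le\eps$ then forces $\min\{1/k,\,k/(n\alpha^{2})\}\le O(\eps)$, which on rearrangement gives $n\ge\Omega(\min\{k^{2}/\alpha^{2},\,k/(\alpha^{2}\eps)\})$, as claimed.

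The main technical obstacle is arguing that the minimax $L_1$ rate is genuinely attained at the specific distribution $\bp=$ uniform even after simplex projection. Although Hadamard- or KRR-based estimators are translation-equivariant in a way that makes their unprojected $L_1$ error at uniform match the worst-case rate in expectation, the projection step is an $L_2$-contraction and can only shrink the error. The delicate case is the saturated regime $n\lesssim k^{2}/\alpha^{2}$, where the raw estimator's $L_1$ error is of order $1$: here one must show---for example via a symmetry argument together with a direct calculation of the expected support size of the projected output---that the projected estimator still satisfies $\E\norm{\hp-\bp}_2^{2}=\Omega(1/k)$, which is what yields the $k^{2}/\alpha^{2}$ branch of the bound.
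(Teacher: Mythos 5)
Your route is genuinely different from the paper's, and the central identity is attractive: forcing $A^*$ to output a point of the simplex makes the linear term vanish, so $C(\hp)-C(\bp)=\norm{\hp-\bp}_2^2$ exactly at the uniform $\bp$, and the problem reduces to lower bounding the expected squared $L_2$ error. But there are two genuine gaps. First, the step you flag as the ``main technical obstacle'' is in fact the crux, and it is left unresolved: you need your specific projected estimator to attain the minimax $L_1$ rate \emph{at the uniform distribution}, including in the saturated regime $n\lesssim k^2/\alpha^2$, and projection onto the simplex can in principle destroy this (uniform sits at the barycenter of the simplex, which is exactly where projection helps most). The paper avoids this issue entirely: it takes $A^*$ to be debiased $k$-RAPPOR \emph{without} projection, so that $\E[\tp_x]=p_x$, and it lower bounds $\E[|C(\tbp)-C(\bp)|]$ by $\sum_x\Var[\tp_x]-\frac{2}{\sqrt k}\sqrt{\sum_x\Var[\tp_x]}$ via triangle/Cauchy--Schwarz/Jensen, with $\Var[\tp_x]\asymp\frac{1}{\alpha^2 n}$ computed explicitly; no claim about where the minimax $L_1$ risk is attained is ever needed.

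Second, the final rearrangement is not valid. From $\min\{1/k,\,k/(n\alpha^2)\}\le O(\eps)$ you cannot conclude $n\ge\Omega(\min\{k^2/\alpha^2,\,k/(\alpha^2\eps)\})$: whenever $\eps\gtrsim 1/k$ the premise holds vacuously for every $n$, yet this is precisely the regime in which the $k/(\alpha^2\eps)$ branch of the conclusion is the binding one. The problem is not cosmetic: for a projected $A^*$ the error $C(\hp)-C(\bp)=\norm{\hp-\bp}_2^2$ genuinely saturates (it is at most $1$, and plausibly $\Theta(1/k)$ once $n$ is small), so the theorem may simply be false for your choice of $A^*$ when $\eps\gg 1/k$. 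The paper's unprojected estimator has error of order $k/(\alpha^2 n)$ with no cap --- it blows up as $n$ decreases --- which is exactly what lets the paper run the dichotomy ``either $n>k^2/(1296\alpha^2)$, or $\sqrt n\le k/(36\alpha)$ and then $n\ge 2k/(9\alpha^2\eps)$'' for all $\eps$. To repair your argument you would either have to drop the projection (at which point you have essentially reproduced the paper's proof) or restrict the statement to $\eps\le c/k$, where your chain of inequalities does deliver $n\ge\Omega(k^2/\alpha^2)$.
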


\subsection{Comparison to previous work}
\label{sec:private_comparison}

\citet{butucea2021locally} gave a non-interactive $(\alpha, 0)$-locally differentially private mechanism for estimating collision probability with sample complexity $\tilde{O}\left(\frac{(\log k)^2}{ \alpha^2\eps^2}\right)$ and communication complexity $O(k)$. \citet{BravoHermsdorff2022} gave a non-interactive mechanism with the same privacy guarantee, sample complexity $\tilde{O}\left(\frac{1}{\alpha^4\eps^2}\right)$, and communication complexity $O(1)$.\footnote{Note that \citeauthor{BravoHermsdorff2022}'s original NeurIPS paper claimed $\tilde{O}\left(\frac{1}{\alpha^2\eps^2}\right)$ sample complexity, but a more recent version on Arxiv claims $\tilde{O}\left(\frac{1}{\alpha^4\eps^2}\right)$ sample complexity and explains that the original version contained mistakes. See References for a link to the Arxiv version.} Thus the latter mechanism is better suited to distributions with very large support sizes, but is a worse choice when the privacy parameter $\alpha$ is very small. Our mechanism combines the advantages of these mechanisms, at the expense of a slightly weaker privacy guarantee and an additional $O(C(\bp) \log \frac1\beta)$ samples.

Notably, the earlier mechanism due to \citet{BravoHermsdorff2022} is also based on counting collisions among salted hash values. But there are key differences between the mechanisms which lead to our improved sample complexity. In their mechanism, the server assigns salts to the users, each user adds noise to their hash value, and the server counts hash collisions among $\frac{n}{2}$ disjoint user pairs. In our mechanism, the salts are chosen privately, no additional noise is added to the hash values, and the server counts hash collisions among $\Theta(n^2)$ user pairs. Using more available pairs to count collisions is a more efficient use of data (although it significantly complicates the analysis, as the pairs are not all independent), and choosing the salts privately eliminates the need for additional randomness, which improves the accuracy of the estimate.




\section{SEQUENTIAL TESTING}

In this section we describe an algorithm for sequentially testing whether $C(\bp) = c_0$ (the null hypothesis) or $|C(\bp) - c_0| \ge \eps > 0$ (the alternative hypothesis), where $c_0$ is given but $\eps$ is unknown. 
Algorithm \ref{alg:closeness_simple} below draws samples from the distribution $\bp$ one at a time. Whenever the algorithm observes a sample $x_i$ it updates a running estimate of $|C(\bp) - c_0|$ based on the all-pairs collision frequency observed so far. The algorithm compares this estimate to a threshold that shrinks like $\Theta\big(\sqrt{i^{-1}\log \log i}\big)$ and rejects the null hypothesis as soon as the threshold is exceeded. Although our algorithm is simple to describe, its proof of correctness is non-trivial, as it involves showing that a sequence of dependent random variables (the running estimates) become concentrated. Our proof uses a novel decoupling technique to construct martingales based on the running estimates.

\begin{algorithm}[!h]   
\caption{Sequential testing of collision probability \label{alg:closeness_simple}}
\begin{algorithmic}[1] 
\STATE {\bf Given:} Null hypothesis value $c_0$, confidence level $\delta \in [0, 1]$.
\FOR{$i=1, 2, 3, \dots$}
\STATE Draw sample $x_i$ from distribution $\bp$.
\STATE Let $T_i = \sum_{j=1}^{i-1} \indic{x_i = x_j} - 2(i - 1)c_0$.
\IF{$\left\lvert \frac{2}{i(i-1)}\sum_{j=1}^i T_j \right\rvert > 3.2\sqrt{\frac{\log \log i + 0.72 \log (20.8 /\delta)}{i}}$} \label{alg1:reject}
\STATE Reject the null hypothesis. 
\ENDIF
\ENDFOR
\end{algorithmic}
\end{algorithm}

The next theorem provides a guarantee about the accuracy of Algorithm \ref{alg:closeness_simple}.

\begin{thm}\label{thm:general_closeness_thm_simple}
If $C(\bp) = c_0$ then Algorithm \ref{alg:closeness_simple} does not reject the null hypothesis with probability $1 - \delta$. If $|C(\bp) - c_0| \ge \eps$ then Algorithm \ref{alg:closeness_simple} rejects the null hypothesis after observing $N$ samples, where
$
N \in O\left(\frac{1}{\eps^2} \log \log \frac1\eps \log \frac{1}{\delta}\right)
$
with probability $1 - \delta$.
\end{thm}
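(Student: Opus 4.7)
My strategy is to reduce the statement to a time-uniform concentration bound on the standard U-statistic estimator $\hC_i := \binom{i}{2}^{-1}\sum_{1\le k<j\le i}\indic{x_k=x_j}$ of $C(\bp)$, at the law-of-the-iterated-logarithm (LIL) rate $\sqrt{(\log\log i)/i}$. A short rearrangement shows that the quantity inside the absolute value in line~\ref{alg1:reject} is essentially a shift of $\hC_i$, and is centered precisely when $C(\bp) = c_0$. Hence once I have an LIL-scale uniform bound on $|\hC_i - C(\bp)|$, both halves of the theorem follow: under the null the test statistic stays below the algorithm's matching LIL-shaped threshold for every $i$, and under the alternative the deterministic gap $|C(\bp) - c_0| \ge \eps$ eventually dominates the shrinking threshold, yielding rejection after $N = O(\eps^{-2}\log\log(1/\eps)\log(1/\delta))$ steps.

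To establish the LIL concentration I would start from Hoeffding's decomposition of the degree-2 U-statistic. Writing $p_{x_j}$ for the probability mass of the realized value $x_j$ and $\psi(x,y) := \indic{x=y} - p_x - p_y + C(\bp)$ for the canonical (degenerate) kernel, this reads
\begin{equation*}
\hC_i - C(\bp) \;=\; \frac{2}{i}\sum_{j=1}^i \bigl(p_{x_j} - C(\bp)\bigr) \;+\; \binom{i}{2}^{-1}\sum_{1\le k<j\le i}\psi(x_k,x_j).
\end{equation*}
The first summand is a sum of i.i.d., bounded, centered variables; a time-uniform Chernoff inequality with an LIL-shaped boundary (e.g.\ the stitched boundary of Howard--Ramdas--McAuliffe--Sekhon) gives, with probability $1-\delta/2$, a uniform bound of exactly the form $\sqrt{(\log\log i + \log(1/\delta))/i}$ up to constants; the explicit numbers $3.2$, $0.72$, and $20.8$ in the algorithm come out by calibrating the chosen boundary.

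The main obstacle is controlling the degenerate remainder $R_i := \binom{i}{2}^{-1}\sum_{1\le k<j\le i}\psi(x_k,x_j)$ uniformly in $i$. Although $(\binom{i}{2}R_i)_i$ is itself a martingale with respect to the sample filtration --- the degeneracy identity $\E[\psi(x_k,x_i)\mid x_1,\ldots,x_{i-1}] = 0$ makes each increment $\Delta_i = \sum_{k<i}\psi(x_k,x_i)$ centered --- its per-step increments can be of size $\Theta(i)$, so a naive Azuma bound is wasteful. The decoupling I would use is to track, for each fixed $k$ separately, the column martingale $j\mapsto \sum_{\ell=k+1}^j \psi(x_k,x_\ell)$, bound its quadratic variation with a Freedman-type inequality (the inner increments are bounded and have per-step variance $O(C(\bp))$), and then aggregate across $k$ via a union bound over dyadic time windows $2^m\le i\le 2^{m+1}$. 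This should yield $|R_i| = O((\log(i/\delta))/i)$ uniformly in $i$ with probability $1-\delta/2$, which is of smaller order than the LIL rate and hence absorbed into the threshold.

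Combining the two bounds by a union bound gives the null-case conclusion. For the alternative, the reverse triangle inequality yields $|\hC_i - c_0|\ge |C(\bp) - c_0| - |\hC_i - C(\bp)|\ge \eps - o(1)$, and the algorithm rejects the first time both the threshold and the concentration slack fall below $\eps/3$; solving $\sqrt{(\log\log i + \log(1/\delta))/i}\le c\eps$ for $i$ delivers the stated sample complexity. Everything apart from the decoupling step in the third paragraph is either standard time-uniform concentration or constant-bookkeeping.
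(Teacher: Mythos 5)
Your overall architecture coincides with the paper's: both start from the Hoeffding/de la Pe\~na--Gin\'e decomposition of the all-pairs statistic into a linear part $\frac{2}{i}\sum_j (p_{x_j}-C(\bp))$ and a degenerate remainder built from the canonical kernel $\psi$ (the paper's $g_{\bp}$), apply the time-uniform boundary of Howard et al.\ to get an LIL-rate bound, union-bound the two pieces, and then argue rejection once the shrinking threshold drops below $\eps$. The linear part and the endgame are fine.

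The gap is in your third paragraph, the control of the degenerate remainder $R_i$. Your plan is to bound each column martingale $j\mapsto\sum_{\ell=k+1}^{j}\psi(x_k,x_\ell)$ by Freedman and then ``aggregate across $k$,'' but the only aggregation available there is the triangle inequality, and that forfeits exactly the cross-column cancellation that makes the degenerate part small. Concretely, Freedman gives $\bigl|\sum_{\ell\le i}\psi(x_k,x_\ell)\bigr| \lesssim \sqrt{i\,(p_{x_k}+C(\bp))\,L}+L$ with $L=\log(i/\delta)$ after the union bound over columns and dyadic windows; summing these absolute values over the $i$ columns gives, in the worst case (e.g.\ a concentrated $\bp$), a bound of order $i^{3/2}\sqrt{L}+iL$, hence $|R_i|=O\bigl(\sqrt{\log(i/\delta)/i}\bigr)$ --- not the claimed $O(\log(i/\delta)/i)$. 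This is \emph{not} of smaller order than the LIL boundary, and worse, it carries a $\log i$ rather than a $\log\log i$, so solving for the rejection time yields $N=O\bigl(\eps^{-2}(\log\frac1\eps+\log\frac1\delta)\bigr)$ instead of the stated $O\bigl(\eps^{-2}\log\log\frac1\eps\log\frac1\delta\bigr)$. To recover the theorem you should instead treat the entire degenerate sum as a single martingale in $j$ with increments $\Delta_j=\sum_{k<j}\psi(x_k,x_j)$, which are centered given $\mathcal{F}_{j-1}$ and admit the predictable envelope $|\Delta_j|\le 4j$; the cumulative variance proxy $\sum_{j\le m}O(j^2)=O(m^3)$ then gives, after normalizing by $\binom{m}{2}$, a time-uniform bound of order $\sqrt{\log\log m/m}$ with only a $\log\log$ dependence --- this is precisely how the paper handles it, applying the Howard et al.\ boundary directly to the martingale $\bar{U}_m=\sum_j Y_j$.
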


The $\log \log \frac1\eps$ factor in Theorem \ref{thm:general_closeness_thm_simple} results from our application of a confidence interval due to \citet{HoRaMcSe21} that shrinks like $\Theta\big(\sqrt{i^{-1}\log \log i}\big)$. Note that $\log \log \frac1\eps < 4$ if $\eps \ge 10^{-10}$, so this factor is negligible for nearly all problem instances of practical interest.

We remark that our proof technique bears some superficial resemblance to the approach used in recent work by \citet{AaFlGa21}. They make use of the fact that for any random variable $T$ taking values from $\mathbb{N}$ and for all $T \in \mathbb{N}_+$, it holds that $\mathbb{E} \left[ T  \right] \le N + \sum_{t\ge N} \mathbb{P} ( T \ge t )$. Then with a carefully selected $N$ and Chernoff bounds with infinite many applications of union bound implies upper bound on the expected sample complexity. By contrast, we construct a test martingale that is specific to collision probability and apply an anytime or time-uniform concentration bound to the martingale introduced by~\citet{WaRa20}. 



\subsection{Lower bound}

The next theorem proves that sample complexity bound in Theorem \ref{thm:general_closeness_thm_simple} is tight up to log-log factors.

\begin{thm}\label{corr:lower_seq} Let $N$ be the number of samples observed by a sequential testing algorithm for collision probability. For all $\eps, \delta \in [0, 1]$ there exists a distribution $\bp$ and $c_0 \in [0, 1]$ such that $|C(\bp) - c_0| \ge \eps$ and if the algorithm rejects the null hypothesis with probability $1 - \delta$ then
$
\E[N] \ge \Omega\left(\frac{\log(1/\delta)}{\eps^2}\right).
$
\end{thm}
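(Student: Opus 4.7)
The plan is to prove the lower bound via a standard two-hypothesis change-of-measure argument. First I would reduce to a two-point testing problem by constructing explicit Bernoulli distributions $\bp_0 = (1/4, 3/4)$ and $\bp_1 = (1/4 + \Delta, 3/4 - \Delta)$ on $\{1,2\}$, with $c_0 := C(\bp_0) = 5/8$ and $\Delta = \Theta(\eps)$. A direct expansion gives $C(\bp_1) - C(\bp_0) = -\Delta + 2\Delta^2$, so taking $\Delta = 2\eps$ ensures $|C(\bp_1) - c_0| \ge \eps$ whenever $\eps$ is below some absolute constant. (For $\eps$ above that constant the target bound reduces to $\Omega(\log(1/\delta))$, which is trivially necessary to distinguish any two fixed distributions with confidence $\delta$.) The reason this particular perturbation works is that $1/4$ is bounded away from both $0$ and $1$ and also away from the critical point $1/2$ of $p \mapsto p^2 + (1-p)^2$: a second-order Taylor expansion therefore yields the key inequality $\mathrm{KL}(\bp_1 \| \bp_0) \le O(\Delta^2) = O(\eps^2)$, linear in the separation $\eps$ in the collision-probability metric.

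Next I would invoke a standard change-of-measure inequality for sequential experiments (in the spirit of the Garivier--Kaufmann lemma from the bandit literature): for any stopping time $\tau$ adapted to the i.i.d.\ samples and any event $A \in \mathcal{F}_\tau$,
\[
\E_{\bp_1}[\tau]\cdot\mathrm{KL}(\bp_1 \| \bp_0) \;\ge\; \mathrm{kl}\bigl(\Pr_{\bp_1}[A],\,\Pr_{\bp_0}[A]\bigr),
\]
where $\mathrm{kl}(x,y) = x\log(x/y) + (1-x)\log((1-x)/(1-y))$ is the binary relative entropy. Set $A$ to be the event that the algorithm rejects the null hypothesis. The validity of the tester (rejecting under $\bp_1$ with probability $\ge 1-\delta$, and not rejecting under the null with probability $\ge 1-\delta$) yields $\Pr_{\bp_1}[A] \ge 1-\delta$ and $\Pr_{\bp_0}[A] \le \delta$. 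Using that $\mathrm{kl}(x,y)$ is non-decreasing in $x$ for $x \ge y$ and non-increasing in $y$ for $y \le x$, I lower-bound the right side by $\mathrm{kl}(1-\delta,\delta) = (1-2\delta)\log\tfrac{1-\delta}{\delta} = \Omega(\log(1/\delta))$. Dividing through by $\mathrm{KL}(\bp_1 \| \bp_0) = O(\eps^2)$ delivers the desired $\E_{\bp_1}[N] \ge \Omega(\log(1/\delta)/\eps^2)$.

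The main technical subtlety, and where I expect the main obstacle, is that under the null $\bp_0$ the tester is not required to stop at all, so $\tau$ may be improper and $\E_{\bp_0}[\tau]$ can be infinite; this blocks a direct appeal to change-of-measure lemmas that presuppose integrable stopping times on both hypotheses. I would address this by truncating to the bounded stopping time $\tau \wedge T$ and the event $A \cap \{\tau \le T\}$, applying the inequality to these truncated quantities (which are valid because $\tau \wedge T \le T$ is integrable under both measures), and then sending $T \to \infty$: the right-hand side converges to $\mathrm{kl}(\Pr_{\bp_1}[A],\Pr_{\bp_0}[A])$ by continuity, while the left-hand side converges to $\E_{\bp_1}[\tau]\cdot\mathrm{KL}(\bp_1 \| \bp_0)$ by monotone convergence (and the bound is vacuous if $\E_{\bp_1}[\tau] = \infty$).
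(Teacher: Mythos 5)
Your proof is correct and takes essentially the same route as the paper: a two-point reduction to a pair of distributions separated by $\Theta(\eps)$ in collision probability but only $O(\eps^2)$ in KL divergence, followed by a sequential change-of-measure lower bound (the paper cites Lemma A.1 of Oufkir et al.\ where you invoke the Garivier--Kaufmann inequality, and uses a heavy-atom family where you use an explicit Bernoulli pair). If anything, your write-up is more careful: you explicitly exhibit a pair with $\mathrm{KL}(\bp_1\Vert\bp_0)=O(\eps^2)$ rather than relying, as the paper does, on the one-sided relation $d_C^2 \le 18\, d_{\mathrm{KL}}$, and you handle the possibly improper stopping time under the null via truncation.
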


\subsection{Comparison to previous work}
\label{sec:sequential_comparison}

\citet{das_2017} described a general sequential testing algorithm that can be adapted to estimate collision probability and has a worst-case sample complexity comparable to Algorithm \ref{alg:closeness_simple}. However, their approach has two major disadvantages relative to ours which lead to much higher sample complexities in practice, and which we empirically confirm in Section \ref{sec:experiments}. First, their approach is based on a simple ``doubling trick'': They repeatedly invoke a non-sequential testing algorithm on subsequences of samples with successively smaller values of the error tolerance $\epsilon$, and stop when the testing algorithm rejects. This is a wasteful use of samples compared to our approach, as stopping cannot occur within a subsequence, and everything learned from previous subsequences is discarded. Second, applying their approach to collision probability testing requires partitioning the $n$ samples into $\frac{n}{2}$ disjoint pairs, so that the observed collisions are independent of each other. By contrast, our approach uses observed collisions among all $\Theta(n^2)$ pairs of samples to estimate collision probability, which significantly complicates the theoretical analysis, but leads to better empirical performance.

\subsection{Private sequential testing}
\label{sec:private_sequential_testing}

In this section, we apply the private estimation technique to sequential testing. The most obvious approach to private sequential testing is to apply Mechanism \ref{alg:second} repeatedly with an exponentially increasing sample size, and then apply the union bound over the repetitions. We call this algorithm the Doubling Tester (as it is based on the doubling trick~\citep{Auer95,CeLu06}), and it has the following sample complexity.
\begin{thm}\label{thm:private_doubling}
If $C(\bp) = c_0$ then the Doubling Tester is $(\alpha, \beta)$-local differential privacy and does not reject the null hypothesis with prob. $1 - \delta$. If $|C(\bp) - c_0| \ge \eps$ then the Doubling Tester rejects the null hypothesis after observing $N_1$ samples if $\alpha>1$ and $N_2$ if $\alpha\le 1$, where 
$
N_1 \in O \left(\left(\frac{e^\alpha + 1}{e^\alpha - 1}\right)^2\frac{\log^2 \frac1\eps \log \frac1\beta \log \frac1\delta}{\eps^2 }  \right)
$
and
$
N_2 \in  O \left(\frac{\log^2 1/ \eps }{\eps^2} \cdot \frac{\log \frac1\beta \log \frac1\delta}{\alpha^2}\right)
$ 
respectively, with prob. $1 - \delta$.
\end{thm}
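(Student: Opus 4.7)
The plan is to analyze the Doubling Tester as a sequence of invocations of Mechanism~\ref{alg:second} with geometrically shrinking error tolerances. In round $t = 1, 2, \ldots$ the tester invokes Mechanism~\ref{alg:second} with absolute-error target $\eps_t = 2^{-t}$, per-round failure probability $\delta_t = \Theta(\delta/t^2)$, and per-round privacy parameters $(\alpha_t, \beta_t)$ chosen so that the cumulative privacy budget and failure probability across all rounds stay within $(\alpha, \beta)$ and $\delta$ respectively; it rejects the null hypothesis the first time the returned estimate $\hC_t$ satisfies $|\hC_t - c_0| > 2\eps_t$.

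Privacy follows by applying Theorem~\ref{thm:second_privacy} to each invocation together with LDP composition across rounds (parallel composition if each round uses a fresh batch of users, basic composition if samples are reused); the data-dependent stopping decision is just post-processing of the users' responses and does not affect the privacy guarantee. For correctness, I condition on the good event $\mathcal{G} = \{|\hC_t - C(\bp)| \le \eps_t \text{ for all } t\}$, which by Theorem~\ref{thm:second_error} and the union bound has probability $\ge 1 - \sum_t \delta_t \ge 1 - \delta$. On $\mathcal{G}$: under the null hypothesis, $|\hC_t - c_0| \le \eps_t < 2\eps_t$ for every $t$ so no rejection occurs; under $|C(\bp) - c_0| \ge \eps$, the first round $t^{\star} = \lceil \log_2(4/\eps) \rceil$ satisfies $\eps_{t^{\star}} \le \eps/4$ and the triangle inequality gives $|\hC_{t^{\star}} - c_0| \ge \eps - \eps_{t^{\star}} \ge 3\eps/4 > 2\eps_{t^{\star}}$, triggering rejection at or before round $t^{\star}$.

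For the sample complexity, I sum $\sum_{t \le t^{\star}} n_t$ using Theorem~\ref{thm:second_error}: after converting the relative-error bound to absolute error via $\eps_{t,\text{rel}} = \eps_t / C(\bp)$ and using $C(\bp) \le 1$, one obtains $n_t = \tilde O\bigl((\tfrac{e^{\alpha_t}+1}{e^{\alpha_t}-1})^2 \log(1/\beta_t) \log(1/\delta_t) / \eps_t^2\bigr)$. Since $\eps_t^{-2}$ grows geometrically in $t$, the sum is dominated by its final term $n_{t^{\star}}$; substituting $t^{\star} = O(\log(1/\eps))$, the regime-dependent value of $(\tfrac{e^{\alpha_t}+1}{e^{\alpha_t}-1})^2$ (which is $\Theta(1)$ for $\alpha_t \ge 1$ and $\Theta(1/\alpha_t^2)$ otherwise), and the budget split across rounds then yields the two bounds $N_1$ (for $\alpha > 1$) and $N_2$ (for $\alpha \le 1$). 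The main obstacle is the $\log^2(1/\eps)$ factor: it arises from the interaction between the $O(\log(1/\eps))$ rounds and the corresponding fragmentation of the privacy budget needed to keep the overall guarantee at $(\alpha, \beta)$-LDP, and because $\eps$ is unknown in advance, the decay schedule for $(\alpha_t, \beta_t, \delta_t)$ must be anytime-compatible, which complicates the bookkeeping relative to a fixed-horizon analysis.
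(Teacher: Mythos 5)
The paper does not actually write out a proof of Theorem~\ref{thm:private_doubling}; it only describes the construction in Section~\ref{sec:private_sequential_testing} (invoke Mechanism~\ref{alg:second} repeatedly with exponentially increasing sample size, union-bound over the invocations). Your construction, stopping rule, choice of $\delta_t = \Theta(\delta/t^2)$, and the triangle-inequality argument for rejection by round $t^\star = \lceil \log_2(4/\eps)\rceil$ all match that intent and are correct.

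The genuine gap is in how you handle privacy, and it propagates into your sample-complexity accounting. You hedge between parallel composition (fresh users per round) and "fragmenting the privacy budget" across rounds, and then attribute the $\log^2\frac1\eps$ factor to that fragmentation. This cannot be right. First, the stated bounds $N_1$ and $N_2$ carry the \emph{full} $\left(\frac{e^\alpha+1}{e^\alpha-1}\right)^2$ and $\log\frac1\beta$, not per-round fractions of them; if you actually split $\alpha$ over an a priori unbounded number of rounds (recall $\eps$, hence $t^\star$, is unknown in advance), you would need $\alpha_t \to 0$ summably, and the $1/\alpha_{t^\star}^2$ blow-up would give at least $\log^4\frac1\eps$ in the small-$\alpha$ regime, contradicting the theorem. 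Second, no splitting is needed: "exponentially increasing sample size" means each round recruits a fresh batch of users, so each user's datum enters exactly one invocation of Mechanism~\ref{alg:second}, which is $(\alpha,\beta)$-LDP by Theorem~\ref{thm:second_privacy}; the server's decision to launch another round is post-processing of earlier transcripts and costs nothing. With that fixed, your own geometric-sum argument shows the total is $O(n_{t^\star})$ with $\log(1/\delta_{t^\star}) = O(\log\frac1\delta + \log\log\frac1\eps)$, which is \emph{within} the claimed $O(\cdot)$ bounds (indeed somewhat better than the stated $\log^2\frac1\eps$), so the theorem follows — but you should delete the budget-splitting narrative and derive the polylogarithmic factor from the round count and the shrinking $\delta_t$, not from privacy composition.
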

Another approach is based on the observation that the hashing of Mechanism \ref{alg:second} only depends on the privacy parameters. So we can apply the same hashing to the input of Algorithm \ref{alg:closeness_simple}. The only difference is that the hashing introduces a bias to the test statistic, which has to be corrected, and the scale of the statistic becomes of order $\log \frac1\beta$. We call this the Private Sequential Tester (PSQ), which is described in Appendix \ref{app:private_seq}, and it has the following sample complexity.

\begin{thm}\label{thm:private_seq_sample_comp}
If $C(\bp) = c_0$ then the PSQ algorithm is $(\alpha, \beta)$-local differential privacy and does not reject the null hypothesis with probability $1 - \delta$. If $|C(\bp) - c_0| \ge \eps$ then the PSQ algorithm rejects the null hypothesis after observing $N_1$ samples if $\alpha>1$ and $N_2$ if $\alpha\le 1$, where 
$
N_1 \in O \left(\left(\frac{e^\alpha + 1}{e^\alpha - 1}\right)^4\frac{\log \log \frac1\eps \log^2 \frac1\beta \log \frac1\delta}{\eps^2 }  \right)
$
and
$
N_2 \in  O \left(\frac{\log \log 1/ \eps }{\eps^2} \cdot \frac{\log^2 \frac1\beta \log \frac1\delta}{\alpha^2}\right)
$ 
respectively, with probability $1 - \delta$.
\end{thm}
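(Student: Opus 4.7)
The privacy claim is immediate. Each user in PSQ performs exactly the same computation as a user in Mechanism \ref{alg:second}: draw an independent salt $s_i \in \{1,\dots,r\}$ and emit the single bit $v_i = h(\langle s_i, x_i\rangle)$ using the server-supplied random hash $h$. Because the user-side protocol is identical and each user is queried only once, the per-user argument underlying Theorem \ref{thm:second_privacy} applies verbatim and gives $(\alpha,\beta)$-local differential privacy for PSQ.

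For correctness I plan to mirror the martingale analysis of Theorem \ref{thm:general_closeness_thm_simple}, adjusted for two effects of hashing. First, a direct calculation over the randomness of $h$, the salts, and the samples gives, for $i\neq j$, $\E[v_i v_j] = C(\bp)/r$: when $\langle s_i,x_i\rangle\neq\langle s_j,x_j\rangle$ the two bits are independent uniform $\pm 1$, while they coincide with probability $C(\bp)/r$, in which case $v_iv_j=1$. Hence $rv_iv_j$ is an unbiased estimator of $C(\bp)$, and the PSQ test statistic is obtained by replacing $\indic{x_i=x_j}$ in Algorithm \ref{alg:closeness_simple} with $rv_iv_j$, together with the standard $V_i^2 - i$ bias correction used in Mechanism \ref{alg:second}. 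Telescoping as in the non-private algorithm then yields $\frac{2}{i(i-1)}\sum_{j\le i} T^{\mathrm{priv}}_j = 2(\hC^{\mathrm{priv}}_i - c_0)$, which is centered under the null and has magnitude at least $2\eps$ under the alternative.

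The second effect of hashing is a change of scale in the martingale increments. Using the decoupling technique developed for Theorem \ref{thm:general_closeness_thm_simple}, I would again embed the sequence of running estimates into a test martingale, and then invoke the time-uniform concentration inequality of \citet{WaRa20}. The new twist is that each rescaled pair contributes a summand in $\{-r,+r\}$, so a crude Hoeffding-style bound would yield a threshold of order $r\sqrt{\log\log i / i}$ and an $r^2$ sample complexity. To attain the $r\cdot\log(1/\beta)$ prefactor claimed in the theorem (and hence $\alpha^{-2}$ rather than $\alpha^{-4}$ when $\alpha\le 1$), I plan to use a variance-adaptive, Bernstein-style anytime bound and exploit that a fresh salted sample collides with any fixed past one with probability only $C(\bp)/r\le 1/r$; this keeps the conditional variance of the properly normalized increment at scale $r$ rather than $r^2$. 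Equating the resulting threshold to $\eps$ and solving for $i$ recovers exactly $N_1$ for $\alpha>1$ and $N_2$ for $\alpha\le 1$.

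The hard part will be the variance analysis. Already in the non-private setting, the decoupling needed to turn the all-pairs collision estimator into a test martingale is delicate; after hashing, one must additionally control conditional variances that scale correctly in the salt parameter $r$, since losing a factor of $r$ at this step would degrade the bound from $\alpha^{-2}$ to $\alpha^{-4}$. Everything else---the privacy reduction to Theorem \ref{thm:second_privacy}, the anytime concentration machinery of \citet{WaRa20}, and the final stopping-time calculation---is inherited directly from existing tools.
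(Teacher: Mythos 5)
Your privacy argument and your bias identity are exactly the paper's: privacy is inherited verbatim from Theorem \ref{thm:second_privacy} because the user-side protocol is unchanged, and your computation $\E[v_iv_j]=C(\bp)/r$ is equivalent to the identity $C(\bp)=2r\bigl(\Pr[v_i=v_j]-\tfrac12\bigr)$ that the paper uses to define the shifted null $c=c_0/(2r)+\tfrac12$. For the sample complexity, however, the paper does something much cruder than what you propose: it simply reruns the Theorem \ref{thm:general_closeness_thm_simple} analysis on the hash bits with the detection gap rescaled to $\eps/(2r)$ (equivalently, with the statistic rescaled by the range $2r$), i.e.\ a Hoeffding-scale time-uniform bound, which is what produces the $\left(\frac{e^\alpha+1}{e^\alpha-1}\right)^4\log^2\frac1\beta=\Theta(r^2)$ prefactor in $N_1$.

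The genuine gap is in your Bernstein step. You claim the conditional variance of the properly normalized increment is of order $r$ rather than $r^2$ because ``a fresh salted sample collides with any fixed past one with probability only $C(\bp)/r$.'' That rare-collision event concerns the salted \emph{inputs} $\langle s_i,x_i\rangle$, which the server never observes. The statistic $T_i$ counts collisions of the one-bit \emph{hash values}, and $\Pr[v_i=v_j]=\tfrac12+\tfrac{C(\bp)}{2r}\approx\tfrac12$: these are essentially fair coin agreements, so $\Var\bigl[2r\,\indic{v_i=v_j}\bigr]=\Theta(r^2)$ and likewise $\Var[r\,v_iv_j]=r^2-C(\bp)^2$. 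A per-pair (or per-increment) Bernstein bound therefore gives nothing beyond Hoeffding, and the step that was supposed to replace $r^2$ by $r\log\frac1\beta$ fails. Any improvement below $r^2$ has to come from cancellation \emph{across} pairs --- in the notation of Proposition \ref{prop:main}, the pure-noise contribution to the variance of the group estimate is $O(r^2/m^2)$ rather than $O(r^2/m)$ because the off-diagonal covariances $\Cov(v_iv_j,\,v_iv_k)=C(\bp)/r-C(\bp)^2/r^2$ are small --- and porting that covariance computation into a self-normalized, time-uniform martingale bound is precisely the work that neither your sketch nor, for that matter, the paper's own two-sentence reduction (which really only supports the $N_1$-type bound) actually carries out.
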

Based on Theorem \ref{thm:private_seq_sample_comp}, the Private Sequential Tester has lower sample complexity than the Doubling Tester, if $\log \frac1\eps $ is much larger than $\log \frac1\beta$. This is often the case since, for example, for a close-to-uniform distribution with large domain size $d$, in which case the interesting parameter budget is $\epsilon < 1/d$. 

\section{EXPERIMENTS}
\label{sec:experiments}

We compared our mechanism for private collision probability estimation (Mechanism \ref{alg:second}) to the recently proposed mechanism from \citet{BravoHermsdorff2022}. As discussed in Section \ref{sec:private_comparison}, we expect Mechanism \ref{alg:second} to outperform their mechanism when the support size of the distribution is large and the privacy requirement is strict. We also compared to the indirect method described in Section \ref{sec:reduction}: Privately estimate the distribution itself, and then compute the collision probability of the estimated distribution. In our experiments we use an open-source implementation of a private heavy hitters algorithm due to \citet{cormode2021frequency}.\footnote{\url{https://github.com/Samuel-Maddock/pure-LDP}}

In Figure \ref{fig:samp_private} we use each mechanism to privately estimate the collision probability of two distributions supported on 1000 elements: the uniform distribution ($p_i = 1/k$) and the power law distribution ($p_i \propto 1/i$). Our simulations show that Mechanism \ref{alg:second} has significantly lower error for small values of the privacy parameters $\alpha$ and $\beta$.

\begin{figure}[!h]
    \centering
    \includegraphics[width=0.4\textwidth]{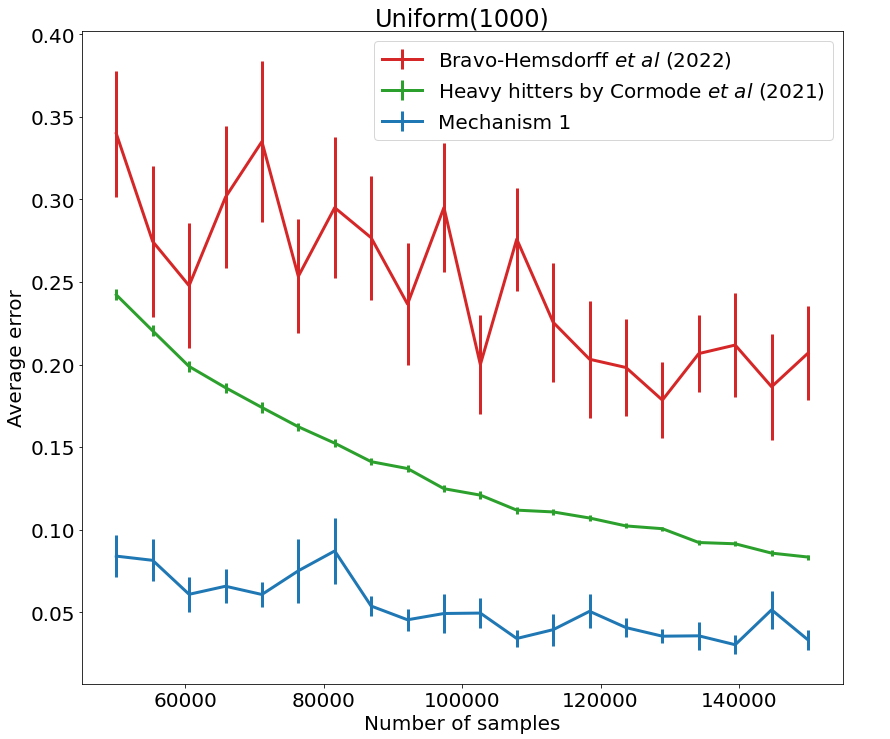}
    \includegraphics[width=0.4\textwidth]{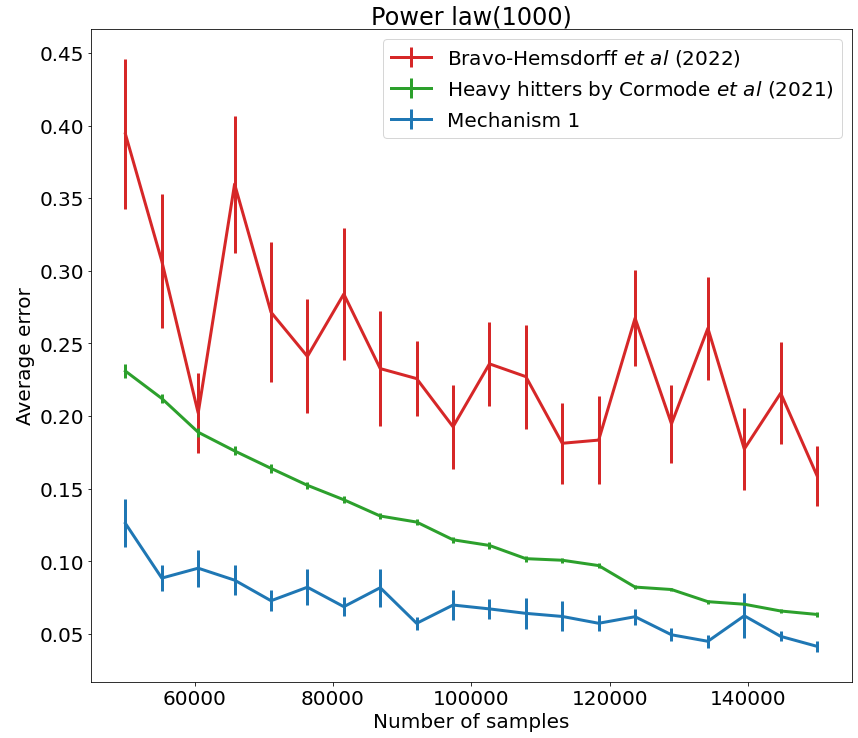}
    \caption{Sample complexity of private collision probability estimation mechanisms for $\alpha = 0.25$. Both mechanisms use the MD5 hash function and confidence level $\delta = 0.1$. For Mechanism 1 we let $\beta = 10^{-5}$. Error bars are one standard error.}
    \label{fig:samp_private}
\end{figure}


We next compared our sequential testing algorithm (Algorithm \ref{alg:closeness_simple}) to the algorithm from \citet{das_2017}. We ran experiments comparing the two algorithms on the power law distribution ($p_i \propto 1/i$) and exponential distribution ($p_i \propto \exp(-i)$), with the results depicted in Figure \ref{fig:samp_das}. Consistent with our discussion in Section \ref{sec:sequential_comparison}, we found that as each tester's null hypothesis approaches the true collision probability, the empirical sample complexity of \citet{das_2017}'s algorithm becomes much larger than the empirical sample complexity of Algorithm \ref{alg:closeness_simple}.

\begin{figure}[!ht]
    \centering
    \includegraphics[width=0.4\textwidth]{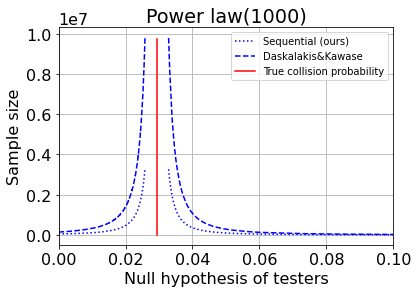}
    \includegraphics[width=0.4\textwidth]{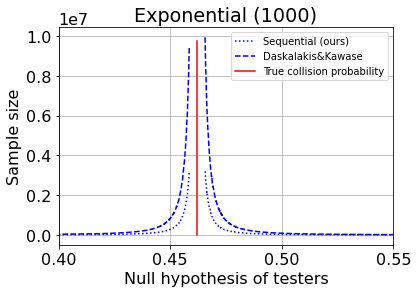}
    \caption{Sample complexity of our sequential tester (Algorithm \ref{alg:closeness_simple}) compared to the sample complexity of \citet{das_2017}'s sequential tester adapted for collision probability testing.}
    \label{fig:samp_das}
\end{figure}

We also compared Algorithm \ref{alg:closeness_simple} to two batch testing algorithms, both of which are described in a survey by \citet{Cannone}:
\begin{itemize}
    \item {\bf Plug-in:} Form empirical distribution $\hat{\bp}$ from samples $x_1, \ldots, x_n$ and let $\hC = C(\hat{\bp})$.
    \item {\bf U-statistics:} Let $\hC = \frac{2}{n(n-1)} \sum_{i < j} \indic{x_i = x_j}$ be the all-pairs collision frequency.
\end{itemize}
Each batch testing algorithm takes as input both the null hypothesis value $c_0$ and a tolerance parameter $\eps$, and compares $|\hC - c_0|$ to $\eps$ to decide whether to reject the null hypothesis $C(\bp) = c_0$. The sample complexity of a batch testing algorithm is determined via worst-case theoretical analysis in terms of $\eps$. On the other hand, sequential testing algorithms automatically adapt their sample complexity to the difference $|C(\bp) - c_0|$.

In Figure \ref{fig:samp_batch} we evaluate batch and our sequential testing algorithms on both on the uniform distribution and power law distributions. We use 20 different support sizes for each distribution, evenly spaced on a log scale between $10$ and $10^6$ inclusively. Varying the support size also varies $|C(\bp) - c_0|$.

As expected, when $|C(\bp) - c_0|$ is large, our sequential testing algorithm requires many fewer samples than the batch algorithm to reject the null hypothesis, and as $|C(\bp) - c_0|$ shrinks the number of samples required sharply increases (see grey areas in Figure \ref{fig:samp_batch}). In all cases our sequential testing algorithm is never outperformed by the batch testing algorithms.

\begin{figure}[!ht]
    \centering
    \includegraphics[width=0.38\textwidth]{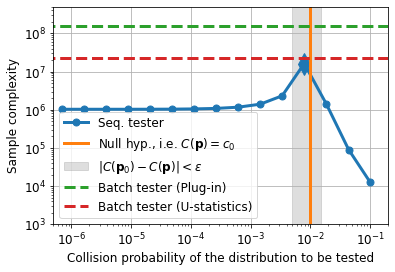}
    \includegraphics[width=0.38\textwidth]{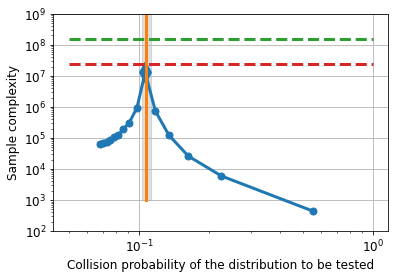}
    \caption{Sample complexity of the sequential tester compared to the sample complexity of the batch testers. For the batch testers, the tolerance parameter $\epsilon$ is set to $0.01$.}
    \label{fig:samp_batch}
\end{figure}


Note that in Figure \ref{fig:samp_batch} the plug-in tester has a worse sample complexity than the U-statistics tester. Since these sample complexities are determined by theoretical analysis, we experimentally confirmed that this discrepancy is not simply an artifact of the analysis. In Figure \ref{fig:batch_comp_est} we run simulations comparing the algorithms in terms of their error $|\hC - C(\bp)|$, and find that the plug-in tester is also empirically worse than the U-statistics tester.

\begin{figure}[!h]
     \centering
     \includegraphics[width=0.45\columnwidth]{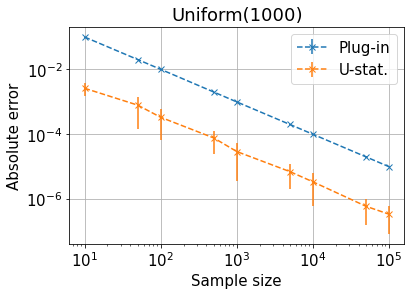}
     \includegraphics[width=0.45\columnwidth]{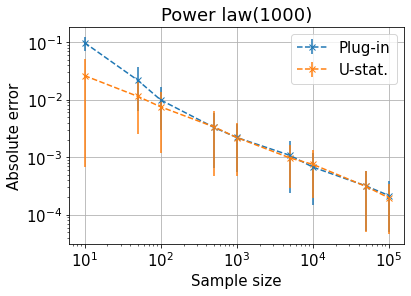}
     \caption{Empirical absolute error of plug-in and U-statistic estimators when the data is generated from uniform distribution and power law with domain size 1000.}
    \label{fig:batch_comp_est}        
\end{figure}

\subsection{Private sequential tester}

\begin{figure}[!ht]
     \centering
         \centering
         \includegraphics[width=0.45\columnwidth]{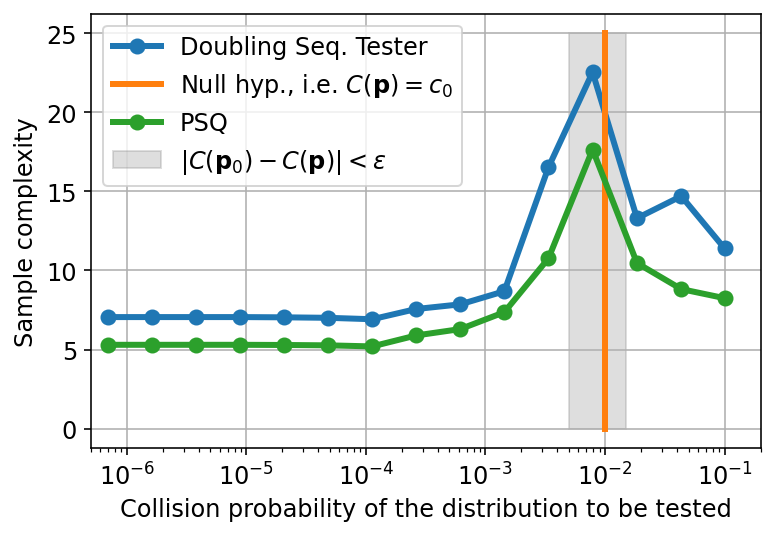}
         \includegraphics[width=0.45\columnwidth]{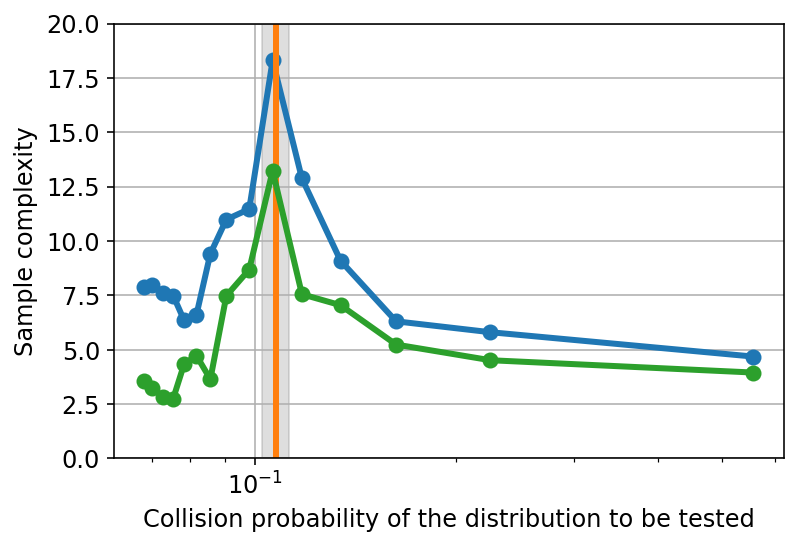}
     \caption{Sample complexity of private sequential testing algorithms with respect to the non-private estimator. The sample complexity of private sequential testers is divided by the sample complexity of the non-private sequential tester from Figure \ref{fig:samp_batch} and the multiplicative factors are shown.}
    \label{fig:priv_samp_comp}        
\end{figure}

In the last set of experiments, we evaluated the private sequential testing algorithm that is introduced in Subsection \ref{sec:private_sequential_testing} and defined in Algorithm \ref{alg:psq}. We refer to this algorithm as PSQ. As a baseline, we ran Mechanism \ref{alg:second} with the so-called ``doubling trick'', with the initial sample complexity set to $2^{13}$, and if the confidence interval of the estimator was not tight enough to make decision, \emph{i.e.} reject or accept, then we doubled the sample complexity and re-ran Mechanism \ref{alg:second}. We refer to this approach as Doubling Sequential Tester. The sample complexity of private sequential algorithms are show in Figure \ref{fig:priv_samp_comp} on the same problem instance that were used for evaluating the non-private testers in Figure \ref{fig:batch_comp_est}. Unsurprisingly, private testers require more samples than the non-private ones, and so we plotted the multiplicative factor between the sample complexity of private and non-private testers. The Doubling Sequential Tester requires 10x more samples if the distribution to be tested close to the null hypothesis. The PSQ algorithm typically requires fewer samples, however it still needs 3-17x times more samples than the non-private tester.

\section{CONCLUSIONS AND FUTURE WORK}

We introduced a locally differentially private estimator for collision probability that is near-optimal in a minimax sense and empirically superior to the state-of-the-art method introduced by \citet{BravoHermsdorff2022}. Our method is based on directly estimating the collision probability using all pairs of observed samples, unlike in previous work. We also introduced a near-optimal sequential testing algorithm that is likewise based on directly estimating the collision probability, and requires far fewer samples than minimax optimal testing algorithms for many problem instances and can be combined with our private estimator.


\bibliographystyle{plainnat}

%

%


\newpage
{\LARGE \bf Supplementary material for ``Near-optimal algorithms for private estimation and sequential testing of collision probability''}

\allowdisplaybreaks

\section{Proof of Theorem \ref{thm:second_privacy}}
\label{app:second_privacy}

Let $X_i$ and $V_i$ be the private value and hash value, respectively, for user $i$, and let $H$ be the random hash function chosen by the server.\footnote{Note that this definition of $V_i$ overrides the definition given in Mechanism \ref{alg:second}. We will not use the latter definition in this proof.} Let $\bX = (X_1, \ldots, X_n)$ and $\bV = (V_1, \ldots, V_n)$.


Recall that in the local model of differential privacy, the output of a mechanism is the entire communication transcript between the users and the server, which in the case of Mechanism \ref{alg:second} consists of the hash function and all the hash values.\footnote{Strictly speaking, the transcript also includes the group assignments. However, these assignments can be chosen arbitrarily, as long as each group has the desired size, and do not impact the privacy of the mechanism in any way. Hence we omit them from the transcript for simplicity.} Therefore our goal is to prove that for any subset $\CO$ of possible values for $(\bV, H)$ we have
\begin{equation}
\Pr[(\bV, H) \in \CO ~|~ \bX = \bx] \le e^\alpha \Pr[(\bV, H) \in \CO ~|~ \bX = \bx'] + \beta. \label{eq:goal}    
\end{equation}
where $\bx$ and $\bx'$ differ in one component.

The proof of Eq.~\eqref{eq:goal} will need a couple of observations. First, given the hash function and private values, the hash values are mutually independent:
\begin{equation}
\Pr[\bV = \bv ~|~ H = h \wedge \bX = \bx] = \prod_i \Pr[V_i = v_i ~|~ H = h \wedge X_i = x_i]. \label{eq:mutual_ind}
\end{equation}
Second, the hash function is independent of the private values:
\begin{equation}
\Pr[H = h ~|~ \bX = \bx] = \Pr[H = h ~|~ \bX = \bx']. \label{eq:ind}
\end{equation}
We also need a definition. Fix user $i^*$. Suppose $\bx$ and $\bx'$ differ in component $i^*$. Define $H_\alpha$ to be the set of all hash functions such that if $h \in H_\alpha$ then
\begin{equation}
\Pr[V _{i^*} = v ~|~ H = h \wedge X _{i^*} = x _{i^*}] \le e^\alpha \Pr[V _{i^*} = v ~|~ H = h \wedge X _{i^*} = x' _{i^*}] \label{eq:upper}
\end{equation}
for all $v$. Note that the definition of $H_\alpha$ depends implicitly on $x _{i^*}$ and $x' _{i^*}$, but does not depend on $X _{i^*}$.

Combining the above we have
\begin{align*}
    & ~\Pr[\bV = \bv \wedge H = h ~|~ \bX = \bx \wedge H \in \CH_\alpha]\\
    =& ~ \Pr[\bV = \bv ~|~ H = h \wedge \bX = \bx \wedge H \in \CH_\alpha] \cdot \Pr[H = h ~|~ \bX = \bx \wedge H \in \CH_\alpha]\\
    =& ~ \prod_i \Pr[V_i = v_i ~|~ H = h \wedge X_i = x_i \wedge H \in \CH_\alpha] \cdot \Pr[H = h ~|~ \bX = \bx \wedge H \in \CH_\alpha] & \because \textrm{Eq.~\eqref{eq:mutual_ind}}\\
    =& ~ \prod_i \Pr[V_i = v_i ~|~ H = h \wedge X_i = x_i \wedge H \in \CH_\alpha] \cdot \Pr[H = h ~|~ \bX = \bx' \wedge H \in \CH_\alpha]  & \because \textrm{Eq.~\eqref{eq:ind}}\\
    \le& ~ e^\alpha \prod_i \Pr[V_i = v_i ~|~ H = h \wedge X_i = x'_i \wedge H \in \CH_\alpha] \cdot \Pr[H = h ~|~ \bX = \bx' \wedge H \in \CH_\alpha] & \because \textrm{Eq.~\eqref{eq:upper}}\\
    =& ~ e^\alpha \Pr[\bV = \bv ~|~ H = h \wedge \bX = \bx' \wedge H \in \CH_\alpha] \cdot \Pr[H = h ~|~ \bX = \bx' \wedge H \in \CH_\alpha] & \because \textrm{Eq.~\eqref{eq:mutual_ind}}\\
    =& ~ e^\alpha \Pr[\bV = \bv \wedge H = h ~|~ \bX = \bx' \wedge H \in \CH_\alpha].
\end{align*}
Summing both sides over $\CO$ yields
\begin{equation}
\Pr[(\bV, H) \in \CO ~|~ \bX = \bx \wedge H \in \CH_\alpha] \le e^\alpha \Pr[(\bV, H) \in \CO ~|~ \bX = \bx' \wedge H \in \CH_\alpha]. \label{eq:alpha}
\end{equation}
Therefore we have
\begin{align*}
    \Pr[(\bV, H) \in \CO ~|~ \bX = \bx] =&~ \Pr[(\bV, H) \in \CO ~|~ \bX = \bx \wedge H \in \CH_\alpha] \cdot \Pr[H \in \CH_\alpha ~|~ \bX = \bx]\\
    &~+ \Pr[(\bV, H) \in \CO ~|~ \bX = \bx \wedge H \not\in \CH_\alpha] \cdot \Pr[H \not\in \CH_\alpha ~|~ \bX = \bx]\\
    \le&~ \Pr[(\bV, H) \in \CO ~|~ \bX = \bx \wedge H \in \CH_\alpha] \cdot \Pr[H \in \CH_\alpha ~|~ \bX = \bx]\\
    &~+ \Pr[H \not\in \CH_\alpha ~|~ \bX = \bx]\\
    =&~ \Pr[(\bV, H) \in \CO ~|~ \bX = \bx \wedge H \in \CH_\alpha] \cdot \Pr[H \in \CH_\alpha ~|~ \bX = \bx'] & \because \textrm{Eq.~\eqref{eq:ind}}\\
    &~+ \Pr[H \not\in \CH_\alpha ~|~ \bX = \bx]\\
    \le&~ e^\alpha \Pr[(\bV, H) \in \CO ~|~ \bX = \bx' \wedge H \in \CH_\alpha] \cdot \Pr[H \in \CH_\alpha ~|~ \bX = \bx']  & \because \textrm{Eq.~\eqref{eq:alpha}}\\
    &~+ \Pr[H \not\in \CH_\alpha ~|~ \bX = \bx]\\
    =&~ e^\alpha \Pr[(\bV, H) \in \CO \wedge H \in \CH_\alpha ~|~ \bX = \bx'] + \Pr[H \not\in \CH_\alpha ~|~ \bX = \bx]\\
    \le&~ e^\alpha \Pr[(\bV, H) \in O ~|~ \bX = \bx'] + \Pr[H \not\in \CH_\alpha ~|~ \bX = \bx]\\
    =&~ e^\alpha \Pr[(\bV, H) \in O ~|~ \bX = \bx'] + \Pr[H \not\in \CH_\alpha] & \because \textrm{Eq.~\eqref{eq:ind}}
\end{align*}
It remains to show that $\Pr[H \not\in \CH_\alpha] \le \beta$. Recall that $x _{i^*} \neq x' _{i^*}$ are the only different values in $\bx$ and $\bx'$. For any hash value $v$ and salt $s$ define the random variables
\begin{align*}
p_{s, v}(H) &= \indic{H(\langle j_{i^*}, s, x _{i^*} \rangle) = v}\\
p'_{s, v}(H) &= \indic{H(\langle j_{i^*}, s, x' _{i^*} \rangle) = v}
\end{align*}
Also define $\bap_v(H) = \frac1r \sum_s p_{s, v}(H)$ and $\bap'_v(H) = \frac1r \sum_s p'_{s, v}(H)$. Observe that
\begin{align*}
\Pr[V _{i^*} = v ~|~ H = h \wedge X _{i^*} = x _{i^*}] = \bap_v(h)\\
\Pr[V _{i^*} = v ~|~ H = h \wedge X _{i^*} = x' _{i^*}] = \bap'_v(h)
\end{align*}
since each user chooses their salt uniformly at random from $\{1, \ldots, r\}$. Therefore
\begin{equation}
\Pr[H \not\in \CH_\alpha] = \Pr\left[\exists v : \frac{\bap_v(H)}{\bap'_v(H)} > e^\alpha\right]. \label{eq:prob}
\end{equation}
We will analyze the right-hand side of Eq.~\eqref{eq:prob}. Clearly $\E[p_{s, v}(H)] = \E[p'_{s, v}(H)] = \frac12$, since each $H(\langle j, s, x \rangle)$ is chosen uniformly at random from $\{-1, +1\}$. Also $\bap_v(H)$ and $\bap'_v(H)$ are each the average of $r$ independent Boolean random variables, since each $H(\langle j, s, x \rangle)$ is chosen independently. Therefore, by the Chernoff bound, for all $\eps \in [0, 1]$ and any hash value $v$
\begin{align*}
\Pr\left[\bap_v(H) \ge \frac12(1 + \eps)\right] &\le \exp\left(-\frac{\eps^2r}{6}\right)\\
\Pr\left[\bap'_v(H) \le \frac12(1 - \eps)\right] &\le \exp\left(-\frac{\eps^2r}{6}\right)
\end{align*}
Fix $\eps = \frac{e^\alpha - 1}{e^\alpha + 1}$. Observe that if $\bap_v(H) \le \frac12(1 + \eps)$ and $\bap'_v(H) \ge \frac12(1 - \eps)$ then
\begin{equation}
\frac{\bap_v(H)}{\bap'_v(H)} \le \frac{1 + \eps}{1 - \eps} = e^\alpha. \label{eq:ealpha}
\end{equation}
Continuing from Eq.~\eqref{eq:prob}
\begin{align*}
\Pr\left[\exists v : \frac{\bap_v(H)}{\bap'_v(H)} > e^\alpha\right] &\le \sum_v \Pr\left[\frac{\bap_v(H)}{\bap'_v(H)} > e^\alpha\right]\\
&\le \sum_v \Pr\left[\bap_v(H) \ge \frac12(1 + \eps) \vee \bap'_v(H) \le \frac12(1 - \eps)\right] & \because \textrm{Eq.~\eqref{eq:ealpha}}\\
&\le \sum_v 2\exp\left(-\frac{\eps^2r}{6}\right) & \because \textrm{Chernoff bound}\\
&= 4 \exp\left(-\frac{\eps^2r}{6}\right)\\
&\le \beta
\end{align*}
where the last line follows from $r = 6\left(\frac{e^\alpha + 1}{e^\alpha - 1}\right)^2 \log \frac{4}{\beta} = \frac{6}{\eps^2}\log \frac{4}{\beta}$.

\section{Proof of Theorem \ref{thm:second_error}}

Proving the sample complexity guarantee is straightforward: Since $\E\left[N_j\right] = m$ for each group $j$, the expected number of users who participate in the mechanism is $\E\left[\sum_{j=1}^g N_j\right] = gm = n$.

To prove the error guarantee, we need the following proposition.

\begin{prop} \label{prop:main} Recall the definitions of $a, b, m, r$ in Mechanism \ref{alg:second}. If
\[
\sigma^2 = \frac{3r^2}{bm^3} + \frac{20r^2}{bm^2} + \frac{16r}{bm}C(\bp) + \frac{2}{b}C(\bp)^2
\]
then the estimate $\hC$ returned by Mechanism \ref{alg:second} satisfies
\[
|\hC - C(\bp)| \le 2\sigma
\]
with probability at least $1 - \exp(-\frac{a}{8})$.\end{prop}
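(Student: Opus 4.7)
The proof follows a median-of-means argument. The three claims to establish are: (i) the supergroup estimates $\baC_1, \ldots, \baC_a$ are mutually independent; (ii) $\E[\baC_\ell] = C(\bp)$; and (iii) $\Var(\baC_\ell) \le \sigma^2$. Given these, Chebyshev's inequality gives $\Pr[|\baC_\ell - C(\bp)| > 2\sigma] \le 1/4$, so the Bernoulli indicators $Z_\ell = \indic{|\baC_\ell - C(\bp)| > 2\sigma}$ are independent with $\E[Z_\ell] \le 1/4$. Since $|\hC - C(\bp)| > 2\sigma$ forces at least $\lceil a/2 \rceil$ of the $\baC_\ell$ to deviate by more than $2\sigma$, Hoeffding's inequality applied to $\sum_\ell Z_\ell$ (whose mean is at most $a/4$) gives $\Pr[|\hC - C(\bp)| > 2\sigma] \le \exp(-2a(1/4)^2) = \exp(-a/8)$.

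Claim (i) rests on three facts: Poissonization makes the group sizes $N_j$ mutually independent, samples and salts are drawn independently across users, and a random hash function takes independent values at inputs whose first coordinate is the distinct group index. For claim (ii), condition on $h$: the hash values of users in group $j$ are then i.i.d.\ $\{-1, +1\}$-valued with mean $\mu(h) := \frac{1}{r}\sum_{s, x} p_x h(\langle j, s, x\rangle)$. Because $h$ assigns independent $\pm 1$ values to distinct inputs, $\mu(h)$ is a weighted sum of independent Rademachers and a direct computation yields $\E[\mu(h)^2] = \frac{1}{r^2}\sum_{s, x} p_x^2 = C(\bp)/r$. Poissonization then gives $\E[V_j^2] = \E[N_j] + \E[N_j(N_j-1)]\,\E[\mu(h)^2] = m + m^2 C(\bp)/r$, so $\E[C_j] = C(\bp)$ and $\E[\baC_\ell] = C(\bp)$.

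Claim (iii)---the main obstacle---reduces to bounding $\Var(C_j)$, since $\Var(\baC_\ell) = \Var(C_j)/b$ by within-supergroup independence. Write $V_j^2 = N_j + 2Y_j$ with $Y_j = \sum_{i < i'} v_i v_{i'}$, so $\Var(V_j^2) = \Var(N_j) + 4\Var(Y_j) + 4\Cov(N_j, Y_j)$. The chief difficulty is $\Var(Y_j) = \E[Y_j^2] - \E[Y_j]^2$: expanding $Y_j^2$ gives a sum over ordered pairs of unordered index pairs, which I will split by the overlap pattern (identical pairs, pairs sharing one index, disjoint pairs), contributing conditional expectations $1$, $\mu(h)^2$, and $\mu(h)^4$ respectively. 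The combinatorial counts turn into Poisson factorial moments $\E[N_j(N_j-1)\cdots(N_j-k+1)] = m^k$ after taking expectations, and a direct fourth-moment computation using the Rademacher structure of $h$ gives $\E[\mu(h)^4] \le 3C(\bp)^2/r^2$. Collecting terms (and using $m \ge 1$ to absorb lower-order contributions) yields $\Var(V_j^2) \le 3m + 20m^2 + 16m^3 C(\bp)/r + 2m^4 C(\bp)^2/r^2$; multiplying by $r^2/m^4$ and dividing by $b$ gives exactly $\sigma^2$. Chebyshev then delivers (iii) and the median-of-means argument completes the proof.
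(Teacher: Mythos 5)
Your proposal is correct, and it reaches the stated bound. The outer skeleton (independence of the supergroup averages, unbiasedness, a variance bound of $\sigma^2$, then Chebyshev plus Hoeffding on the indicators to get $\exp(-a/8)$) is the same median-of-means argument the paper uses, though you spell out the tail bound that the paper only cites from the mean-estimation literature. Where you genuinely diverge is in the heart of the matter, the computation of $\E[V_j^2]$ and $\Var[V_j^2]$. The paper conditions on the multiplicity vector $\bM = (M_{s,x})$ of salted samples, invokes the Alon--Matias--Szegedy sketch bounds $\E[V^2 \mid \bM] = C(\bM)$ and $\Var[V^2 \mid \bM] \le 2C(\bM)^2$, and then reduces everything to moments of $C(\bM)$ via the law of total variance and the Poissonization of the independent $M_{s,x}$. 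You instead condition on the hash function $h$, under which the per-user hash values are i.i.d.\ with mean $\mu(h)$, write $V_j^2 = N_j + 2Y_j$, expand $\E[Y_j^2]$ by the overlap pattern of index pairs, and compute the second and fourth Rademacher moments of $\mu(h)$ directly. Both routes use the same Poisson factorial moments; yours trades the appeal to AMS for a self-contained combinatorial computation, and in fact (carrying your decomposition through: $\Var(N_j)=m$, $\Cov(N_j,Y_j)=m^2C(\bp)/r$, $\Var(Y_j)\le \tfrac{m^2}{2}+\tfrac{m^3}{r}C(\bp)+\tfrac{m^4}{2r^2}C(\bp)^2$) yields a slightly sharper constant than the paper's $3m+20m^2+\tfrac{16m^3}{r}C(\bp)+\tfrac{2m^4}{r^2}C(\bp)^2$, which your stated bound then safely dominates. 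The paper's route is shorter if one is willing to import the AMS variance bound as a black box; yours is longer but elementary and makes the role of the hash randomness explicit.
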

\begin{proof} Fix group $j$ and let $M = N_j$. Let $M_{s, x}$ be the number of users in group $j$ who select salt $s$ and sample $x$. Therefore $M = \sum_{s, x} M_{s, x}$. Let $q_{s, x} = \frac{p_x}{r}$ be the probability that a user selects salt $s$ and sample $x$. Let $\bM$ and $\bq$ be vectors whose components are the $M_{s, x}$'s and $q_{s, x}$'s, respectively. Observe that $\bM$ is a sample from $\textrm{Multinomial}(M, \bq)$. Since $M$ is $\textrm{Poisson}(m)$ distributed, we have by the `Poissonization trick' that each $M_{s, x}$ is independent and $\textrm{Poisson}(mq_{s,x})$ distributed. It is well-known \citep{Riordan37} that if $Z$ is $\textrm{Poisson}(\lambda)$ distributed then \begin{align*}
\E[Z^2] &= \lambda + \lambda^2\\
\E[Z^4] &= \lambda + 7 \lambda^2 + 6 \lambda^3 + \lambda^4
\end{align*}
With a slight abuse of notation, let $C(\bz) = \sum_i z^2_i$ for any non-negative vector $\bz$. Let $V = V_j$. In their classic paper on sketches for data streams, \citet{ALON1999137} analyzed $V^2$ as an estimator for $C(\bM)$. In particular they showed
\begin{align*}
    \E[V^2 ~|~ \bM] &= C(\bM)\\
    \Var[V^2 ~|~ \bM] &\le 2C(\bM)^2.
\end{align*}
We will extend these results to express $\E[V^2]$ and $\Var[V^2]$ in terms of $C(\bp)$. We have
\[
\E[V^2] = \E[\E[V^2 ~|~ \bM]] = \E[C(\bM)].
\]
We express $\E[C(\bM)]$ as
\begin{align*}
\E[C(\bM)] &= \sum_{s, x} \E[M^2_{s, x}]\\
&= \sum_{s, x} mq_{s, x} + m^2q^2_{s, x}\\
&= m + m^2 C(\bq)\\
&= m + \frac{m^2}{r} C(\bp)
\end{align*}
where the last line used
\[
C(\bq) = \sum_{s, x} q^2_{s, x} = \frac{1}{r^2} \sum_{s,x} p^2_x = \frac1r C(\bp).
\]
Also, by the law of total variance
\begin{align*}
\Var[V^2] &= \E[\Var[V^2 ~|~ \bM]] + \Var[\E[V^2 ~|~ \bM]]\\
&\le 2\E[C(\bM)^2] + \Var[C(\bM)]\\
&= 2\E[C(\bM)^2] + \E[C(\bM)^2] - \E[C(\bM)]^2\\
&= 3\E[C(\bM)^2] - \E[C(\bM)]^2.
\end{align*}
Both $\E[C(\bM)]^2$ and $\E[C(\bM)^2]$ can be bounded in terms of $C(\bp)$. From above we conclude
\[
\E[C(\bM)]^2 = m^2 + \frac{2m^3}{r}C(\bp) + \frac{m^4}{r^2}C(\bp)^2.
\]
We express $\E[C(\bM)^2]$ as
\begin{align}
\E[C(\bM)^2] &= \E\left[\left(\sum_{s, x} M^2_{s, x}\right)^2\right] \notag\\
&= \E\left[\sum_{s,x,s',x'} M^2_{s, x}M^2_{s', x'}\right] \notag\\
&= \E\left[\sum_{s, x} M^4_{s, x} + \sum_{s \neq s' \vee x \neq x'} M^2_{s, x}M^2_{s', x'}\right] \notag\\
&= \sum_{s, x} \E[M^4_{s, x}] + \sum_{s \neq s' \vee x \neq x'} \E[M^2_{s, x}]\E[M^2_{s', x'}] \notag\\
&= \sum_{s, x} mq_{s, x} + 7m^2q^2_{s, x} + 6m^3q^3_{s, x} + m^4q^4_{s, x} \notag\\
&~~~~+ \sum_{s \neq s' \vee x \neq x'} (mq_{s, x} + m^2q^2_{s, x})(mq_{s', x'} + m^2q^2_{s', x'}) \tag{$\star$} \label{eq:one}
\end{align}
where we used the independence of the $M_{s, x}$'s and the moments of the Poisson distribution. By expanding and rearranging terms we have
\begin{align*}
\eqref{eq:one} &= \sum_{s, x} mq_{s, x} + 7m^2q^2_{s, x} + 6m^3q^3_{s, x} + m^4q^4_{s, x}\\
&~~~~+ \sum_{s \neq s' \vee x \neq x'} m^2q_{s, x}q_{s',x'} + m^3q^2_{s, x}q_{s',x'} + m^3 q_{s,x}q^2_{s',x'} + m^4q^2_{s,x}q^2_{s', x'}\\
&\le \sum_{s, x} mq_{s, x} + 7m^2q^2_{s, x} + 6m^3q^3_{s, x} + m^4q^4_{s, x}\\
&~~~~+ \sum_{s \neq s' \vee x \neq x'} 7m^2q_{s, x}q_{s',x'} + 3m^3q^2_{s, x}q_{s',x'} + 3m^3 q_{s,x}q^2_{s',x'} + m^4q^2_{s,x}q^2_{s', x'}\\
&= m + \sum_{s, x, s', x'} 7m^2q_{s, x}q_{s',x'} + 3m^3q^2_{s, x}q_{s',x'} + 3m^3q_{s, x}q^2_{s',x'} + m^4q^2_{s,x}q^2_{s', x'}\\
&= m + 7m^2 + 6m^3C(\bq) + m^4 C(\bq)^2\\
&= m + 7m^2 + \frac{6m^3}{r}C(\bp) + \frac{m^4}{r^2}C(\bp)^2.
\end{align*}
Therefore
\[
\Var[V^2] \le 3\E[C(\bM)^2] - \E[C(\bM)]^2 \le 3m + 20m^2 + \frac{16m^3}{r}C(\bp) + \frac{2m^4}{r^2}C(\bp)^2.
\]
Putting everything together, we have for each group $j$
\begin{align*}
\E[C_j] &= \E\left[\frac{r(V^2_j - m)}{m^2}\right] = C(\bp)\\
\Var[C_j] &= \Var\left[\frac{r(V^2_j - m)}{m^2}\right] \le \frac{r^2}{m^4}\Var[V^2_j] = \frac{3r^2}{m^3} + \frac{20r^2}{m^2} + \frac{16r}{m}C(\bp) + 2C(\bp)^2.
\end{align*}
Observe that the $C_j$'s are all independent, because they are calculated from disjoint samples and salts, and also because a group index is prepended to each input to the hash function, causing the hash values to be independent across groups. Therefore for each supergroup $\ell$
\begin{align*}
\E[\baC_\ell] &= \frac1b \sum_{j \in J_\ell} \E[C_j] = C(\bp)\\
\Var[\baC_\ell] &= \frac{1}{b^2} \sum_{j \in J_\ell} \Var[C_j] \le \frac{3r^2}{bm^3} + \frac{20r^2}{bm^2} + \frac{16r}{bm}C(\bp) + \frac{2}{b}C(\bp)^2 = \sigma^2
\end{align*}
where the last equality follows from the definition of $\sigma^2$. We know by the analysis of the median-of-means estimator \citep{lugosi2019mean} that 
\[
\Pr[|\hC - C(\bp)| \ge 2\sigma] \le \exp\left(-\frac{a}{8}\right)
\]
which proves the proposition.\end{proof}

We are now ready to complete the proof of the error guarantee. From Proposition \ref{prop:main} we have
\[
\sigma^2 = \frac{3r^2}{bm^3} + \frac{20r^2}{bm^2} + \frac{16r}{bm}C(\bp) + \frac{2}{b}C(\bp)^2 \le \frac{23r^2}{bm^2} + \frac{16r}{bm}C(\bp) + \frac{2}{b}C(\bp)^2.
\]
Plugging $m = \frac{n}{ab}$ and $b = \frac{20}{\eps^2_{\textrm{rel}}}$ into the previous inequality yields
\[
\sigma^2 \le \frac{460a^2r^2}{n^2\eps^2_{\textrm{rel}}} + \frac{16ar}{n}C(\bp) + \frac{\eps^2_{\textrm{rel}}}{10}C(\bp)^2. 
\]
Plugging $n \ge \frac{1280r\log \frac1\delta}{\eps^2_{\textrm{rel}} C(\bp)} = \frac{160ar}{\eps^2_{\textrm{rel}} C(\bp)}$ into the previous inequality yields
\[
\sigma^2 \le \frac{460\eps^2_{\textrm{rel}}}{160^2} C(\bp)^2 + \frac{\eps^2_{\textrm{rel}}}{10}C(\bp)^2 + \frac{\eps^2_{\textrm{rel}}}{10} C(\bp)^2 = \left(\frac{460}{160^2} + \frac{2}{10}\right)\eps^2_{\textrm{rel}} C(\bp)^2.
\]
Since $\exp(-\frac{a}{8}) = \delta$ we have by Proposition \ref{prop:main}
\[
|\hC - C(\bp)| \le 2\sigma \le \left(2\sqrt{\frac{460}{160^2} + \frac{2}{10}}\right)\eps_{\textrm{rel}} C(\bp) < \eps_{\textrm{rel}} C(\bp)
\]
with probability at least $1 - \delta$.

\section{Proof of Corollary \ref{cor:second_error}}

If $\alpha \le 1$ then $\frac{e^\alpha + 1}{e^\alpha - 1} \le O\left(\frac{1}{\alpha}\right)$ because $e^\alpha + 1 \le O(1)$ for all $\alpha \le 1$ and $1 + \alpha \le e^\alpha$ for all $\alpha \in \bbR$. Also let $\eps_{\textrm{rel}} = \frac{\eps}{C(\bp)}$ in the statement of Theorem \ref{thm:second_error}. 

\section{Proof of Theorem \ref{thm:private_lower} }
\label{app:private_lower}

We make use of the lower bound for local differential privacy introduced by~\cite{DuchiWJ16} which relies on a privatized version of Le Cam's two point method. Accordingly, we construct a pair of problem instances $\mathbf{p}_0$ and $\mathbf{p}_1$ for which $ d_C(\mathbf{p}_0, \mathbf{p}_1)= \vert C(\mathbf{p}_0) - C(\mathbf{p}_1) \vert  \ge \Omega ( \tau ) $ and at the same time $d_{\text{KL}}(\mathbf{p}_0, \mathbf{p}_1 ) \in \Theta ( \tau^2)$. Specifically, let
\begin{align}
    \mathbf{p}_0  = \left(\frac{1}{2(K-1)}, \dots, \frac{1}{2(K-1)}, \frac{1}{2} \right) \label{eq:batch_lower1} \quad \text{and} \quad
    \mathbf{p}_1  = \left(\frac{1-\tau}{2(K-1)}, \dots, \frac{1-\tau}{2(K-1)}, \frac{1+\tau}{2} \right) 
\end{align}
The KL divergence between $\mathbf{p}_0$ and $\mathbf{p}_1$ is
\[
d_{\text{KL}} \left( \mathbf{p}_0, \mathbf{p}_1 \right) = \frac{1}{2}\log \frac{1}{1-\tau^2} = \Theta(\tau^2)
\]
and the absolute difference between their collision probability is
\begin{align*}
d_C(\mathbf{p}_0, \mathbf{p}_1) = \left\vert C(\mathbf{p}_0) - C(\mathbf{p}_1) \right\vert 
    & = \frac{\tau}{2} \left( 1+ \left(\frac{\tau}{2} - \frac{1}{2(K-1)} \right) \right) \ge \tau /2
\end{align*}
Proposition 1 of \citet{DuchiWJ16} immediately implies the following Corollary.
\begin{cor}\label{corr:ldp_lower}
Let $\theta$ be an estimator of $C(\mathbf{p})$ which receives $n$ observations from an $\alpha$-locally differential private channel $Q$ with $\alpha \in [0, 23/35]$, \emph{i.e.}, channel $Q$ is a conditional probability distribution which maps each observation $x_i$ to a probability distribution on some finite discrete domain $\mathcal{Z}$. We will denote the privatized data by $Z_i \sim Q( \cdot \vert x_i)$. Then for any pair of distributions $\mathbf{p}_0$ and $\mathbf{p}_1$ such that $d_C(\mathbf{p}_0, \mathbf{p}_1 ) \ge \tau / 2 $, it holds that
\[
\inf_{Q} \inf_{\theta} \sup_{\mathbf{p}} \mathbb{E}_{Q,\mathbf{p}} \left[ d_C( \mathbf{p}, \theta ( Z_1, \dots, Z_n)) \right] \ge \frac{\tau}{4} \left(1 - \sqrt{2\alpha^2 n d_{\text{KL}} \left( \mathbf{p}_0, \mathbf{p}_1 \right)} \right)
\]
\end{cor}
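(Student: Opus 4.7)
\textbf{Proposal for the proof of Corollary \ref{corr:ldp_lower}.} The plan is to reduce the estimation lower bound to a two-point testing problem via Le Cam's method, and then control the total variation distance between the two privatized product measures by invoking Duchi–Wainwright–Jordan's contraction inequality (their Proposition 1). Write $M_i^n$ for the joint distribution of the privatized sample $(Z_1,\dots,Z_n)$ when $X_1,\dots,X_n \stackrel{\text{iid}}{\sim} \mathbf{p}_i$ and the channel $Q$ is applied coordinatewise. The first step is the standard reduction: for any estimator $\theta$,
\[
\sup_{\mathbf{p}\in\{\mathbf{p}_0,\mathbf{p}_1\}} \mathbb{E}_{Q,\mathbf{p}}\!\left[d_C(\mathbf{p},\theta(Z_1,\dots,Z_n))\right] \;\ge\; \tfrac{1}{2}\, d_C(\mathbf{p}_0,\mathbf{p}_1)\,\bigl(1-\|M_0^n-M_1^n\|_{\mathrm{TV}}\bigr),
\]
which follows because any estimator induces a test, and if the two distributions of observations are close in total variation then no test (and hence no estimator) can separate $\mathbf{p}_0$ from $\mathbf{p}_1$ much better than chance. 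Since the hypothesis gives $d_C(\mathbf{p}_0,\mathbf{p}_1)\ge \tau/2$, it remains to show $\|M_0^n-M_1^n\|_{\mathrm{TV}} \le \sqrt{2\alpha^2 n\,d_{\mathrm{KL}}(\mathbf{p}_0,\mathbf{p}_1)}$.

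Next I would control $\|M_0^n - M_1^n\|_{\mathrm{TV}}$ by Pinsker's inequality followed by DWJ's privacy contraction. Pinsker yields $\|M_0^n-M_1^n\|_{\mathrm{TV}}\le \sqrt{\tfrac12 d_{\mathrm{KL}}(M_0^n\|M_1^n)}$, and tensorization of KL for product measures gives $d_{\mathrm{KL}}(M_0^n\|M_1^n)=n\, d_{\mathrm{KL}}(M_0\|M_1)$. Proposition 1 of \citet{DuchiWJ16} states that for any $\alpha$-LDP channel $Q$ and any base distributions,
\[
d_{\mathrm{KL}}(M_0\|M_1)+d_{\mathrm{KL}}(M_1\|M_0)\;\le\;4(e^{\alpha}-1)^{2}\,\|\mathbf{p}_0-\mathbf{p}_1\|_{\mathrm{TV}}^{2},
\]
so in particular $d_{\mathrm{KL}}(M_0\|M_1)\le 4(e^{\alpha}-1)^{2}\|\mathbf{p}_0-\mathbf{p}_1\|_{\mathrm{TV}}^{2}$. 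Applying Pinsker once more to the inner term, $\|\mathbf{p}_0-\mathbf{p}_1\|_{\mathrm{TV}}^{2}\le \tfrac12 d_{\mathrm{KL}}(\mathbf{p}_0,\mathbf{p}_1)$, one obtains $d_{\mathrm{KL}}(M_0^n\|M_1^n)\le 2n(e^{\alpha}-1)^{2}\, d_{\mathrm{KL}}(\mathbf{p}_0,\mathbf{p}_1)$.

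Finally I would absorb the privacy factor into $\alpha^2$. The constant $23/35$ in the hypothesis is calibrated precisely so that $(e^{\alpha}-1)^{2}\le 2\alpha^{2}$ on the interval $\alpha\in[0,23/35]$ (a direct check on the increasing function $\alpha\mapsto (e^{\alpha}-1)/\alpha$ verifies this at the endpoint and hence on the whole range). Plugging this in yields $d_{\mathrm{KL}}(M_0^n\|M_1^n)\le 4\alpha^{2} n\, d_{\mathrm{KL}}(\mathbf{p}_0,\mathbf{p}_1)$, and then a final application of Pinsker gives $\|M_0^n-M_1^n\|_{\mathrm{TV}}\le \sqrt{2\alpha^{2} n\, d_{\mathrm{KL}}(\mathbf{p}_0,\mathbf{p}_1)}$. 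Substituting into the Le Cam inequality and using $d_C(\mathbf{p}_0,\mathbf{p}_1)\ge \tau/2$ produces the claimed lower bound $\tfrac{\tau}{4}\bigl(1-\sqrt{2\alpha^{2} n\, d_{\mathrm{KL}}(\mathbf{p}_0,\mathbf{p}_1)}\bigr)$, and taking the infimum over $Q$ and $\theta$ and the supremum over $\mathbf{p}$ preserves the inequality since $\mathbf{p}_0,\mathbf{p}_1$ are particular members of the parameter space.

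The only non-mechanical step is the numerical verification that $(e^{\alpha}-1)^{2}\le 2\alpha^{2}$ on $[0,23/35]$, which is where the seemingly arbitrary threshold on $\alpha$ originates; everything else is a textbook chain of Le Cam, Pinsker, tensorization, and the cited DWJ contraction inequality. No new technical machinery beyond what is already imported from \citet{DuchiWJ16} is needed.
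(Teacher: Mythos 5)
Your proposal is correct and follows essentially the same route as the paper, which simply asserts that the corollary is an immediate consequence of Proposition 1 of \citet{DuchiWJ16}: your chain of Le Cam's two-point lemma, Pinsker, KL tensorization, the DWJ contraction $d_{\mathrm{KL}}(M_0\|M_1)\le 4(e^{\alpha}-1)^2\|\mathbf{p}_0-\mathbf{p}_1\|_{\mathrm{TV}}^2$, and the calibration $(e^{\alpha}-1)^2\le 2\alpha^2$ on $[0,23/35]$ is exactly the derivation that citation compresses, and you correctly identify the origin of the $23/35$ threshold and recover the constant $\frac{\tau}{4}\bigl(1-\sqrt{2\alpha^2 n\, d_{\mathrm{KL}}(\mathbf{p}_0,\mathbf{p}_1)}\bigr)$.
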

Corollary \ref{corr:ldp_lower} applied to the the pair of distribution defined in \eqref{eq:batch_lower1} with $\tau = 1/(\alpha \sqrt{n})$ implies  that Mechanism 1 is minimax optimal in terms of $\epsilon$ and $\alpha$ by achieving a sample complexity that is $O(1/(\alpha \sqrt{n}))$. 

\section{Proof of Theorem \ref{thm:failure}}

In the simplest version of the $k$-RAPPOR algorithm \citep{erlingsson2014rappor}, each user $i$ with private value $x_i \in [k]$ first constructs a vector $\bv_i \in \{0, 1\}^k$ that has $1$ in component $x_i$ and $0$ elsewhere, and then reports the noisy vector $\hbv_i$ formed by independently flipping each bit in $\bv_i$ with probability $\frac{1}{e^{\alpha/2} + 1}$. Since $x_i$ is drawn from $\bp$, the probability that component $x$ of $\hbv_i$ is $1$ is equal to $q_x$, where
\[
q_x = p_x \cdot \frac{e^{\alpha/2}}{e^{\alpha/2} + 1} + (1 - p_x) \cdot \frac{1}{e^{\alpha/2} + 1}.
\]
The distribution $\tbp = A(x_1, \ldots, x_n)$ estimated by $k$-RAPPOR is defined by $\tp_x = a\hp_x - b$ for all $x \in [k]$, where $\hbp$ is the empirical average of the $\hbv_i$'s, and $a = \frac{e^{\alpha/2} + 1}{e^{\alpha/2} - 1}$ and $b = \frac{1}{e^{\alpha/2} - 1}$ serve to debias the noise. In other words, $\E[\tp_x] = p_x$ for all $x \in [k]$. For all $\alpha \in (0, 1)$ this version of the $k$-RAPPOR algorithm is $(\alpha, 0)$-minimax optimal \citep{acharya2019hadamard}. Let $\bp$ be the uniform distribution. Let $\eps_x = p_x - \tp_x$. We have
\begin{align*}
\E[|C(\tbp) - C(\bp)|] &= \E\left[\left|\sum_x \tp^2_x - \sum_x p^2_x\right|\right]\\
&= \E\left[\left|\sum_x (p_x - \eps_x)^2 - \sum_x p^2_x\right|\right]\\
&= \E\left[\left|\sum_x \eps^2_x - 2 \sum_x \eps_x p_x\right|\right]\\
&\ge \E\left[\left|\sum_x \eps^2_x\right|\right] - \E\left[\left|2\sum_x p_x \eps_x\right|\right] & \because \textrm{Triangle inequality}\\
&= \sum_x \E\left[\eps^2_x\right] - 2\E\left[\left|\sum_x p_x \eps_x\right|\right]\\
&\ge \sum_x \E\left[\eps^2_x\right] - 2\E\left[\sqrt{\sum_x p^2_x}\sqrt{\sum_x \eps^2_x}\right] & \because \textrm{Cauchy-Schwarz}\\
&= \sum_x \E\left[\eps^2_x\right] - \frac{2}{\sqrt{k}}\E\left[\sqrt{\sum_x \eps^2_x}\right] & \because \bp\textrm{ is uniform}\\
&\ge \sum_x \E\left[\eps^2_x\right] - \frac{2}{\sqrt{k}}\sqrt{\sum_x \E\left[\eps^2_x\right]} & \because \textrm{Jensen's inequality}\\
&= \sum_x \Var\left[\tp_x\right] - \frac{2}{\sqrt{k}}\sqrt{\sum_x \Var\left[\tp_x\right]} & \because \textrm{Definition of }\eps_x \textrm{ and }\E[\tp_x] = p_x\\
&= a^2\sum_x \Var\left[\hp_x\right] - \frac{2a}{\sqrt{k}}\sqrt{\sum_x \Var\left[\hp_x\right]} & \because \textrm{Definition of }\tp_x\\
&= \frac{a^2}{n} \sum_x q_x(1-q_x) - \frac{2a}{\sqrt{kn}}\sqrt{\sum_x q_x(1-q_x)} & \because \hp_x \textrm{ is average of }n\textrm{ independent samples}
\end{align*}
It is clear from the definition of $q_x$ that
\[
\frac{1}{e^{\alpha/2} + 1} \le q_x \le \frac{e^{\alpha/2}}{e^{\alpha/2} + 1}.
\]
We also have $\frac{e^{\alpha/2}}{e^{\alpha/2} + 1} \le \frac23$ and $\frac{1}{e^{\alpha/2} + 1} \ge \frac13$, since $\alpha \in (0, 1)$. Therefore, continuing from above, we have
\begin{align*}
\E[|C(\tbp) - C(\bp)|] &\ge \frac{a^2}{n} \sum_x q_x(1-q_x) - \frac{2a}{\sqrt{kn}}\sqrt{\sum_x q_x(1-q_x)} & \textrm{From above}\\
&\ge \frac{a^2}{n} \cdot k \cdot \frac{1}{e^{\alpha/2} + 1}\left(1 - \frac{e^{\alpha/2}}{e^{\alpha/2} + 1}\right) - \frac{2a}{\sqrt{kn}}\sqrt{k \cdot \frac{e^{\alpha/2}}{e^{\alpha/2} + 1}\left(1 - \frac{1}{e^{\alpha/2} + 1}\right)} & \because \textrm{Bounds on }q_x\\
&\ge \frac{a^2k}{9n} - \frac{4a}{3\sqrt{n}} & \because \alpha \in (0, 1)\\
&\ge \frac{4k}{9 \alpha^2 n} - \frac{8}{\alpha\sqrt{n}} & \because \textrm{Definition of }a
\end{align*}
where the last inequality follows because $1 + z \le e^z \le 1 + 2z$ and $2 \le e^z + 1 \le 3$ for all $z \in (0, \frac12)$. Therefore if $\E[|C(\tbp) - C(\bp)|] \le \eps$ we have
\[
n \ge \frac{1}{\eps}\left(\frac{4k}{9\alpha^2} - \frac{8\sqrt{n}}{\alpha}\right).
\]
If $n > \frac{k^2}{1296\alpha^2}$ we are done. Otherwise $n \le \frac{k^2}{1296\alpha^2}$, which implies $\sqrt{n} \le \frac{k}{36\alpha}$. Replacing $\sqrt{n}$ in the above expression with this upper bound shows that $n \ge \frac{2k}{9\alpha^2\eps}$, which completes the proof.

\section{Proof of Theorem \ref{thm:general_closeness_thm_simple}}

First let us define
\begin{align}\label{eq:ustat_seqtesting}
U_m = \frac{2}{m(m-1)} \sum_{i=1}^m \sum_{j=1}^{i-1} \indic{X_i = X_j}
\end{align}
based on $\{ X_1, \dots, X_m\}$. The sequence $U_1, U_2,, \dots $ are dependent sequences, since each of them depends on all previous observations, thus we shall apply a decoupling technique to obtain a martingale sequence which we can use in a sequential test. Based on $U_m$, let us define
\[
\bar{U}_m:= \sum_{i=1}^m \sum_{j=1}^{i-1} g_{\mathbf{p}}(X_i, X_j)
\]
with 
\begin{align*}
g_{\mathbf{p}}(X_i, X_j) 
    & = 
    \indic{X_i = X_j} - \Pr \left( X_i = X_j \vert X_i \right) - \Pr \left( X_i = X_j \vert X_j \right) + C (\mathbf{p})
    \enspace .
\end{align*}
This decoupling technique is motivated by Theorem 8.1.1 of~\citet{de1999decoupling}, since the kernel function $g$ has became centered and degenerate, \emph{i.e.}, $\E \left[ g_{\mathbf{p}}(X_i, X_j) \vert X_j \right] = \E \left[ g_{\mathbf{p}}(X_i, X_j) \vert X_i \right] = 0$, which implies that $\bar{U}_n$ is a zero-mean martingale with $n\ge 2$ as follows.
\begin{lem}\label{lem:decoupling}
$\bar{U}_2, \bar{U}_3, \dots$ is a discrete-time martingale the filtration of which is defined $\mathcal{F}_t = \{ X_1, \dots, X_m \}$ and for all $m$, $$\E \left[ Y_m(\mathbf{p}) \vert \mathcal{F}_{m-1}\right] = 0$$
where $Y_m(\mathbf{p}) = \sum_{i=1}^{m-1} g_{\mathbf{p}}(X_m, X_i)$ if $m\ge 2$ and $Y_1=0$.
\end{lem}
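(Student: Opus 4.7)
The plan is to show directly that the increments $Y_m = \bar{U}_m - \bar{U}_{m-1}$ satisfy $\E[Y_m \mid \mathcal{F}_{m-1}] = 0$, which, combined with the obvious fact that $\bar{U}_m$ is $\mathcal{F}_m$-measurable and integrable (it is a finite sum of bounded random variables), immediately yields the martingale property. First I would unpack $g_{\mathbf{p}}$ in the natural way: since $\Pr(X_i = X_j \mid X_i) = p_{X_i}$ and similarly $\Pr(X_i = X_j \mid X_j) = p_{X_j}$, the kernel simplifies to
\[
g_{\mathbf{p}}(X_i, X_j) = \indic{X_i = X_j} - p_{X_i} - p_{X_j} + C(\mathbf{p}).
\]
Next I would verify the telescoping identity $\bar{U}_m - \bar{U}_{m-1} = \sum_{j=1}^{m-1} g_{\mathbf{p}}(X_m, X_j) = Y_m$ directly from the definition of $\bar{U}_m$ as a double sum, so the martingale claim is equivalent to the degeneracy-style statement $\E[Y_m \mid \mathcal{F}_{m-1}] = 0$.

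The main step is then to compute $\E[g_{\mathbf{p}}(X_m, X_j) \mid \mathcal{F}_{m-1}]$ for each fixed $j < m$. Because $X_m$ is independent of $\mathcal{F}_{m-1}$ and $X_j$ is $\mathcal{F}_{m-1}$-measurable, this conditional expectation collapses to $\E[g_{\mathbf{p}}(X_m, X_j) \mid X_j]$. I would evaluate it termwise: $\E[\indic{X_m = X_j} \mid X_j] = p_{X_j}$, $\E[p_{X_m} \mid X_j] = \sum_x p_x^2 = C(\mathbf{p})$, while $p_{X_j}$ and $C(\mathbf{p})$ pass through the conditional expectation unchanged. The four terms cancel exactly in pairs, giving $\E[g_{\mathbf{p}}(X_m, X_j) \mid X_j] = 0$. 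Summing over $j = 1, \dots, m-1$ yields $\E[Y_m \mid \mathcal{F}_{m-1}] = 0$.

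Finally, I would note that $\bar{U}_m$ is a sum of at most $\binom{m}{2}$ bounded summands (each $g_{\mathbf{p}}$ takes values in $[-2, 2]$), hence integrable, and $\mathcal{F}_m$-measurable by construction, so the martingale conditions are verified. There is no real obstacle here — the argument is the standard Hoeffding decomposition / degenerate $U$-statistic observation that the centered-and-degenerate kernel $g_{\mathbf{p}}$ is designed precisely so that conditioning on one argument makes the expectation vanish; the only thing one must be careful about is bookkeeping the symmetry between the two arguments (both marginals must be subtracted, plus the overall mean added back) so that the cancellation holds for either choice of which variable to condition on.
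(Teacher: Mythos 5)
Your proposal is correct and follows essentially the same route as the paper's proof: verify that the degenerate kernel satisfies $\E[g_{\mathbf{p}}(X_m,X_j)\mid X_j]=0$, use independence of $X_m$ from $\mathcal{F}_{m-1}$ to reduce the conditional expectation of each summand of $Y_m$ to this, and conclude via the telescoping identity $\bar{U}_m-\bar{U}_{m-1}=Y_m$ together with integrability. Your write-up is in fact slightly more explicit than the paper's, which asserts the degeneracy ``by construction'' where you carry out the termwise cancellation.
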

\begin{proof}
This decoupling is motivated by Theorem 8.1.1 of \citet{de1999decoupling}. First note that $\E \left[ g_{\mathbf{p}}(X_i, X_j) \vert X_j \right] = \E \left[ g_{\mathbf{p}}(X_i, X_j) \vert X_i \right] = 0$ by construction. This implies that
\[
\E \left[ Y_m(\mathbf{p}) \vert \mathcal{F}_{m-1} \right] = \sum_{i=1}^{m-1} \E\left[g_{\mathbf{p}}(X_m, X_i) \vert \mathcal{F}_{m-1} \right] = \sum_{i=1}^{m-1} \E\left[g_{\mathbf{p}}(X_m, X_i) \vert X_i \right] = 0
\]
for all $m\ge 2$. Since $\bar{U}_m = \sum_{i=1}^m  Y_i ( \mathbf{p} )$, it holds that
\[
\E \left[ \bar{U}_m \vert \mathcal{F}_{m-1} \right] = \bar{U}_{m-1} + \E \left[ Y_m (\mathbf{p}) \vert \mathcal{F}_{m-1} \right] = \bar{U}_{m-1} \enspace .
\]
Finally, it is straightforward that $\E \left[ \vert Y_m (\mathbf{p}) \vert \right] < \infty $ which implies that $\bar{U}_2, \bar{U}_3, \dots$ is a discrete-time martingale by definition.
\end{proof}
The empirical sequence is $\bar{u}_m = \sum_{i=1}^m y_m (\mathbf{p})$ with
\[
y_j(\mathbf{p}) = \sum_{i=1}^{m-1} \indic{x_i = x_j} - \sum_{i=1}^{m-1} p_{x_i} - (m-1) p_{x_j} + (m-1) C(\mathbf{p})
\]
which is a realization of a martingale with bounded difference such that $\vert \bar{U}_k - \bar{U}_{k-1}\vert = \vert Y_k\vert \le 4m$ and $y_1(\mathbf{p})=0$.
However we cannot compute the empirical sequence, since the parameters of distribution are not known. As a remedy, we further decompose $\bar{U}_n$ as the sum of two sequences based on the observation that
\[
\mathbb{E} \left[ p_{X_i} \right] = \sum_{i} p_{x_i}^2 = C(\mathbf{p})
\]
which implies that
$
\sum_{i=1}^{m} ( p_{X_i} - C(\mathbf{p}) )
$
which is again a zero-mean martingale sequence with the same filtration $\mathcal{F}_m$ such that the difference $\vert p_{X_i} - C(X) \vert < 1$ for all $i$. This motivates the following decomposition of $\bar{U}_n$ as
\[
Y_j(\mathbf{p}) = \underbrace{\sum_{i=1}^{j-1} \indic{X_i = X_j} - 2 (j-1) C(\mathbf{p})}_{T_j(\mathbf{p})} + \underbrace{2(j-1) C(\mathbf{p}) - \sum_{i=1}^{j-1} p_{X_i} - (j-1) p_{X_j}}_{E_j(\mathbf{p})}
\]
Note that $T_m (\mathbf{p})$ can be computed, and it is a zero-mean martingale sequence up to an error term $E_n (\mathbf{p})$ which we cannot be computed, since the parameters of the underlying distribution $\mathbf{p}$ is not available to the tester. Also note that $T_m (\mathbf{p})$ is a centralized version of $U_m$ defined in \eqref{eq:ustat_seqtesting}. More detailed, we have that
\[
\frac{2}{m(m-1)} \sum_{i=1}^m T_m (\mathbf{p}) = U_m - C(\mathbf{p})
\]
which means that Algorithm \ref{alg:closeness_simple} uses the sequence of $U_1, \dots, U_m$ as test statistic which was our point of departure. Now we will focus on $E_m (\mathbf{p})$ and how it can be upper bounded.

Further note that $E_n (\mathbf{p})$ can be again decomposed into sequence of sums of zero mean-mean terms which we can upper bound with high probability. Due to the construction, it holds that
\begin{align*}
    \sum_{i=1}^m  Y_i(\mathbf{p}) & =  \sum_{i=1}^m  T_i(\mathbf{p}) +  \sum_{i=1}^m E_i (\mathbf{p} )
\end{align*}
We can apply the time uniform confidence interval of ~\citet{HoRaMcSe21} to the lhs which implies that it holds that
\begin{align}\label{eq:first_term}
\Pr \left[ \forall m \in \mathbb{N} : \left\vert \frac{2}{m(m-1)} \sum_{i-1}^m Y_i (\mathbf{p} ) \right\vert \ge \phi( i, \delta ) \right] \le \delta \enspace .
\end{align}
if the data is generated from $\mathbf{p}$ where 
\[
\phi( i, \delta ) = 1.7\sqrt{\frac{\log \log i + 0.72 \log (10.4 /\delta)}{i}} \enspace .
\] 
Note that the confidence interval of \citet{HoRaMcSe21} applies to the sum of discrete time martingales where each term is sub-Gaussian. This also applies to  $Y_i(\mathbf{p}$, since it is a bounded random variable.

Next we upper bound $\sum_i E_i(\mathbf{p})$. For doing so, we decompose each term as
\[
E_i (\mathbf{p} ) = \sum_{j=1}^{i-1} \big( F_2(\mathbf{p}) - p_{X_j} \big) + (i-1)(F_2(\mathbf{p}) - p_{X_i}) 
\]
which implies
\begin{align*}
    \sum_{i-1}^m E_i (\mathbf{p} )
    & = 
    \sum_{i=1}^m \sum_{j=1}^{i-1} \big( F_2(\mathbf{p}) - p_{X_j} \big) + \sum_{i=1}^m (i-1)(F_2(\mathbf{p}) - p_{X_i}) \\    
    & = 
    \sum_{i=1}^{m-1} (m-i) \big( F_2(\mathbf{p}) - p_{X_i} \big) + \sum_{i=1}^m (i-1)(F_2(\mathbf{p}) - p_{X_i}) \\
    & = m \sum_{i=1}^{m} \big( F_2(\mathbf{p}) - p_{X_i} \big)
\end{align*}
Apply the time uniform confidence interval of ~\citet{HoRaMcSe21} to $E_i (\mathbf{p} )$, we have that
\begin{align} \label{eq:second_term}
\Pr \left[ \forall m \in \mathbb{N} : \left\vert \frac{2}{m(m-1)} \sum_{i-1}^m E_i (\mathbf{p} ) \right\vert \ge \phi( i, \delta ) \right] \le \delta \enspace .
\end{align}
Due to union bound, we can upper bound the difference of $T_i (\mathbf{p})$ and $Y_i(\mathbf{p} )$ using \eqref{eq:second_term} and \eqref{eq:first_term} as
\[
\frac{2}{m(m-1)}\left\vert \sum_{i=1}^m  Y_i(\mathbf{p}) -  \sum_{i=1}^m  T_i(\mathbf{p}) \right\vert \le 2\phi( i, \delta/2 )
\]
with probability at least $1-\delta$ for all $m$ even if $m$ is a random variable that depends on $X_1, \dots, X_m$. This implies that if the observations is generated from a distribution with parameters $\mathbf{p}$, then $\tfrac{2}{m(m-1)}T_i (\mathbf{p})$ stays close to zero, including all distribution $\mathbf{p}_0$ such that $C(\mathbf{p}_0) = c_0$. This implies the correctness of Algorithm ~\ref{alg:closeness_simple}.

Finally note that 
\begin{align*}
    \left\vert \frac{2}{m(m-1)} \sum_{i=1}^m Y_m (\mathbf{p}) - \frac{2}{m(m-1)} \sum_{i=1}^m Y_i(\mathbf{p_0}) \right\vert = \vert C(\mathbf{p} ) - \underbrace{C( \mathbf{p}_0 )}_{=c_0} \vert
\end{align*}
for any $\mathbf{p}_0$ such that $C(\mathbf{p}_0) = c_0$ which implies the sample complexity bound. This concludes the proof.

\section{Proof of Theorem \ref{corr:lower_seq}}

Before we proof the lower bound, we need to get a better understating of the relation of the total variation distance and $d_C( \mathbf{p}, \mathbf{p}')= \vert C(\mathbf{p}) - C(\mathbf{p}')\vert $

\subsection{Total variation distance}

The \emph{total variation distance} between random variables $X$ and $Y$ is defined
\[
\left \vert X - Y \right \rvert = \frac12 \sum_z \left \lvert\Pr[X = z] - \Pr[Y = z]\right \rvert
\]
where the sum is over the union of the supports of $X$ and $Y$. 
\begin{thm} \label{thm:reduction} For any $X$ and $Y$
\[
\left \lvert C(X) - C(Y) \right \rvert \le 6 \left \lvert X - Y \right \rvert.
\]\end{thm}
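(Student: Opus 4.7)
Let me denote $p_z = \Pr[X = z]$ and $q_z = \Pr[Y = z]$, so $C(X) = \sum_z p_z^2$, $C(Y) = \sum_z q_z^2$, and $|X - Y| = \tfrac{1}{2}\sum_z |p_z - q_z|$ by definition. The plan is to reduce $|C(X) - C(Y)|$ to the $\ell^1$ difference of the two pmfs by means of the elementary factorization $p_z^2 - q_z^2 = (p_z - q_z)(p_z + q_z)$, and then exploit the fact that $p_z + q_z$ is bounded.

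Concretely, I would first write
\[
C(X) - C(Y) \;=\; \sum_z (p_z - q_z)(p_z + q_z),
\]
then apply the triangle inequality to obtain
\[
|C(X) - C(Y)| \;\le\; \sum_z |p_z - q_z|\,(p_z + q_z).
\]
Since $p_z, q_z \in [0,1]$ we have $p_z + q_z \le 2$, so the right-hand side is at most $2 \sum_z |p_z - q_z| = 4\,|X - Y|$. This already proves the claim with constant $4$, which is stronger than the stated constant $6$; the looser factor in the theorem statement leaves plenty of slack, so essentially any elementary bookkeeping suffices.

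As a sanity check and an alternative route (useful if one prefers a probabilistic derivation that matches the ``collision'' interpretation), I would consider independent copies $X_1, X_2$ of $X$ and $Y_1, Y_2$ of $Y$ under a product maximal coupling, so that $\Pr[X_i \ne Y_i] = |X - Y|$ for $i = 1,2$. Using $C(X) = \Pr[X_1 = X_2]$ and $C(Y) = \Pr[Y_1 = Y_2]$, observe that whenever $X_1 = Y_1$ and $X_2 = Y_2$ the events $\{X_1 = X_2\}$ and $\{Y_1 = Y_2\}$ coincide; hence their symmetric difference is contained in $\{X_1 \ne Y_1\} \cup \{X_2 \ne Y_2\}$, whose probability is at most $2|X - Y|$ by a union bound. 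This gives the bound with constant $2$.

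There is no real obstacle here: the factorization is algebraic, and the only choice is what crude bound to put on $p_z + q_z$. I would write up the algebraic proof since it is shortest and needs no coupling machinery, noting explicitly that the factor $6$ in the statement is not tight.
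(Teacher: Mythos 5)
Your proposal is correct, and in fact both of your routes prove a strictly stronger statement than the theorem. Your main argument is the same elementary algebraic reduction the paper uses — bound $\sum_z \lvert p_z^2 - q_z^2\rvert$ by a constant times $\sum_z \lvert p_z - q_z\rvert = 2\lvert X - Y\rvert$ — but with a tighter decomposition. The paper writes $p_z = q_z + \delta_z$, expands $\sum_z (q_z+\delta_z)^2$, and bounds the cross term by $2\sum_z\lvert\delta_z\rvert$ and the square term by $\sum_z\lvert\delta_z\rvert$, accumulating the constant $3\sum_z\lvert\delta_z\rvert = 6\lvert X-Y\rvert$; your difference-of-squares factorization $p_z^2-q_z^2=(p_z-q_z)(p_z+q_z)$ handles both terms at once via $p_z+q_z\le 2$ and yields constant $4$. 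Your coupling argument is genuinely different in character: it replaces the pointwise algebra with the probabilistic identity $C(X)=\Pr[X_1=X_2]$, a maximal coupling, and the containment of the symmetric difference $\{X_1=X_2\}\,\triangle\,\{Y_1=Y_2\}$ in $\{X_1\ne Y_1\}\cup\{X_2\ne Y_2\}$, giving the still better constant $2$; this version also makes transparent that the bound is really a statement about the total variation distance between the product measures $\bp\otimes\bp$ and $\bq\otimes\bq$. Since the theorem is only used downstream (via Pinsker) to relate $d_C$ to $d_{\mathrm{KL}}$ up to constants, the improved constant does not change anything materially, but either of your write-ups would be a clean substitute for the paper's.
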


\begin{proof}
Let $z_1, z_2, \ldots$ be an enumeration of the union of the supports of $X$ and $Y$. Let $p_i = \Pr[X = z_i]$ and $q_i = \Pr[Y = z_i]$.

Assume without loss of generality $C(X) \le C(Y)$. It suffices to prove $C(Y) \le C(X) + 6|X - Y|$. Let $\delta_i = p_i - q_i$. We have
\begin{align*}
C(X) &= \sum_i p^2_i\\
&= \sum_i (q_i + \delta_i)^2\\
&= \sum_i q^2_i + 2\sum_i q_i \delta_i + \sum_i \delta^2_i\\
&\le \sum_i q^2_i + 2\sum_i q_i |\delta_i| + \sum_i \delta^2_i\\
&\le \sum_i q^2_i + 2\sum_i |\delta_i| + \sum_i \delta^2_i\\
&\le \sum_i q^2_i + 2\sum_i |\delta_i| + \sum_i |\delta_i|\\
&= C(Y) + 6|X - Y|
\end{align*}
and rearranging completes the proof.
\end{proof}

\subsection{Lower bound}

Based on of Lemma A.1 due to~\cite{AaFlGa21}, one can lower bound the stopping time of any sequential testing algorithm in expectation. Note that this lower bound readily applies to our setup and implies a lower bound for the expected sample complexity which is
\begin{align}\label{eq:lower}
\frac{\log 1/3\delta}{ d_{\text{KL}} (\mathbf{p}, \mathbf{p}')}
\end{align}
where
\[
d_{C }(\mathbf{p}, \mathbf{p}') = \vert  C(\mathbf{p}) - C(\mathbf{p}') \vert = \epsilon
\]

In addition to this, the following Lemma lower bounds the sensitivity of KL divergence in terms of collision probability.
\begin{lem}\label{lem:dc_dkl}
For any random variables $X$ and $X'$ with parameters $\mathbf{p}$ and $\mathbf{p}$, it holds
\[
d_C( \mathbf{p}, \mathbf{p}')^2 \le 18d_{\text{KL}}( \mathbf{p}, \mathbf{p}')
\]
\end{lem}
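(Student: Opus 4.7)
The plan is to chain together two inequalities: first relate $d_C$ to total variation distance using the result already established in Theorem \ref{thm:reduction}, and then relate total variation distance to KL divergence via Pinsker's inequality.

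More concretely, Theorem \ref{thm:reduction} gives $d_C(\mathbf{p},\mathbf{p}') = |C(X)-C(X')| \le 6 |X - X'|$, where $|X-X'|$ denotes the total variation distance between $\mathbf{p}$ and $\mathbf{p}'$. Squaring both sides yields $d_C(\mathbf{p},\mathbf{p}')^2 \le 36\, |X-X'|^2$. Pinsker's inequality states $|X-X'|^2 \le \tfrac{1}{2}\, d_{\text{KL}}(\mathbf{p},\mathbf{p}')$, so combining these two inequalities immediately gives
\[
d_C(\mathbf{p},\mathbf{p}')^2 \le 36 \cdot \tfrac{1}{2}\, d_{\text{KL}}(\mathbf{p},\mathbf{p}') = 18\, d_{\text{KL}}(\mathbf{p},\mathbf{p}'),
\]
which is the desired bound.

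There is essentially no obstacle here since both ingredients are known: Theorem \ref{thm:reduction} was proved earlier in this appendix, and Pinsker's inequality is a standard textbook result (e.g., in \citet{DeAl09}). The only subtlety worth flagging is that the factor of $18$ is obtained by multiplying the constant $36$ from Theorem \ref{thm:reduction}'s squared form with the $1/2$ from Pinsker; if one wanted a tighter constant one could try to bound $|C(X)-C(X')|$ directly in terms of $d_{\text{KL}}$ without routing through TV, but since the lemma only needs to feed into the $\Omega(\log(1/\delta)/\eps^2)$ lower bound via \eqref{eq:lower}, the constant is immaterial and this two-line derivation suffices.
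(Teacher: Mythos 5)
Your proposal is correct and follows exactly the paper's own (one-line) proof: chain Theorem \ref{thm:reduction} with Pinsker's inequality, with the constant $18 = 36 \cdot \tfrac{1}{2}$ arising precisely as you describe. Nothing further is needed.
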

\begin{proof}
Pinsker's inequality and  Theorem \ref{thm:reduction} implies this result.
\end{proof}

Lemma \ref{lem:dc_dkl} applied to \eqref{eq:lower} implies that Theorem \ref{thm:general_closeness_thm_simple} achieves optimal sample complexity, since for any distribution for which 
\[
C( \mathbf{p}_0) = c_0
\]
and 
\[
d_{C }(\mathbf{p}, \mathbf{p}') = \epsilon
\]
the expected sample complexity of any tester is lower bounded by
\[
\frac{\log 1/3\delta}{\epsilon^2}
\]
This concludes the proof.


\section{Batch testers used in the experiments}

In the experimental study, we used two batch testers as baseline. Each of these testers are based on learning algorithm which means that using a learning algorithm, the collision probability is estimated with an additive error $\epsilon /2 $ and then one can decide whether the true collision probability is close to $c_0$ or not. This approach is caller \emph{testing-by-learning}. In this section, we present exact sample complexity bound for these batch testers and in addition to this, we show that these approaches are optimal in minimax sense for testing collision probability for discrete distributions. In this section we present the following results:

\begin{itemize}
    \item We start by presenting a minmax lower bound for the batch testing problem which is $\Omega (\epsilon^{-2})$. In addition, we also show that the same lower bound applies to learning.
    \item In Subsection \ref{sec:warmup}, we consider two estimators, i.e. plug-in and U-statistic, and we compute their sample complexity upper bound that are of order $\epsilon^{-2}$ and they differ only in constant. These are presented in Theorem \ref{thm:plug-in} and \ref{thm:ustat_estimatorsamp}, respectively. 
    \item In Subsection \ref{subseq:testing-by-learning}, we present the testing-by-learning approach and discuss that the plug-in estimator is minmax optimal on a wide range of parameters.
\end{itemize}

\subsection{Lower bound for estimation and testing}

To construct lower bound for estimation and testing we consider the pair of distributions defined in (\ref{eq:batch_lower1}) with $\tau = \epsilon$. In this case, we obtain two distributions such that $d_{\text{KL}} \left( \mathbf{p}_0, \mathbf{p}_1 \right) = \Theta(\epsilon^2)$ and $d_{C} \left( \mathbf{p}_0, \mathbf{p}_1 \right) \ge \epsilon/2$. Then estimator lower bound can be obtained based on LeCam's theorem (See Appendix \ref{app:lecam}) which is $\Theta(1/\epsilon^2)$ as follows.
\begin{cor}
For any estimator $\hat{\theta}_n$ for Collision probability $F_2 (\mathbf{p})$ based on $n\in o(1/\epsilon^2)$, there exist a discrete distribution $\mathbf{p}$ for which 
\[
\mathbb{E}_P \left[ \left\vert \hat{\theta}_n ( \mathcal{D}_n )- F_2(\mathbf{p}) \right\vert \right] \ge C \cdot \epsilon
\]
where $C>0$ does not depend on the distribution $\mathbf{p}$.
\end{cor}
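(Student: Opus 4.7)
The plan is to apply Le Cam's two-point method using the pair of distributions $\mathbf{p}_0, \mathbf{p}_1$ from \eqref{eq:batch_lower1} with the choice $\tau = \epsilon$. The excerpt already shows that for this pair we have $d_C(\mathbf{p}_0, \mathbf{p}_1) \ge \tau/2 = \epsilon/2$ and $d_{\text{KL}}(\mathbf{p}_0, \mathbf{p}_1) = \tfrac12 \log \tfrac{1}{1-\epsilon^2} = \Theta(\epsilon^2)$, so the main work is just to plug these into a standard Le Cam argument.

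First I would recall the following version of Le Cam's bound: for any functional $\theta(\cdot)$ and any estimator $\hat{\theta}_n$ based on $n$ i.i.d.\ samples,
\[
\max_{i \in \{0,1\}} \mathbb{E}_{\mathbf{p}_i}\!\left[\,\lvert \hat{\theta}_n - \theta(\mathbf{p}_i) \rvert\,\right] \;\ge\; \frac{\lvert \theta(\mathbf{p}_0) - \theta(\mathbf{p}_1) \rvert}{2}\bigl(1 - \mathrm{TV}(\mathbf{p}_0^n,\mathbf{p}_1^n)\bigr).
\]
Specializing $\theta = F_2$ gives the factor $\lvert F_2(\mathbf{p}_0) - F_2(\mathbf{p}_1) \rvert / 2 \ge \epsilon/4$.

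Next I would bound the total variation distance between the product measures. Combining the chain rule (tensorization) of KL divergence, which gives $d_{\text{KL}}(\mathbf{p}_0^n,\mathbf{p}_1^n) = n \cdot d_{\text{KL}}(\mathbf{p}_0,\mathbf{p}_1)$, with Pinsker's inequality, we get
\[
\mathrm{TV}(\mathbf{p}_0^n,\mathbf{p}_1^n) \;\le\; \sqrt{\tfrac{1}{2}\, n \cdot d_{\text{KL}}(\mathbf{p}_0,\mathbf{p}_1)} \;=\; O\!\left(\sqrt{n\,\epsilon^2}\right).
\]
Under the hypothesis $n \in o(1/\epsilon^2)$ the right-hand side is $o(1)$, so for $n$ sufficiently small relative to $1/\epsilon^2$ we have $\mathrm{TV}(\mathbf{p}_0^n,\mathbf{p}_1^n) \le 1/2$. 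Plugging back into the Le Cam bound yields a constant $C = 1/8$ (absorbing smaller-order adjustments), which gives the claimed lower bound of order $\epsilon$ at one of the two instances $\mathbf{p}_0$ or $\mathbf{p}_1$.

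There is no real obstacle: the two-point construction, the KL bound, and the collision-probability gap are all already verified earlier in the excerpt, so the proof is a two-line application of Le Cam plus Pinsker once these ingredients are cited. The only minor subtlety is keeping track of constants to show the same $C$ works for all sufficiently small $\epsilon$ and uniformly over the choice of support size $K$ (any $K \ge 2$ suffices, and the KL estimate $\tfrac12 \log \tfrac{1}{1-\epsilon^2} \le \epsilon^2$ for $\epsilon \le 1/2$ makes this clean).
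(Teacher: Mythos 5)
Your proposal is correct and is essentially the paper's argument: both use Le Cam's two-point method on the same pair $\mathbf{p}_0,\mathbf{p}_1$ from \eqref{eq:batch_lower1} with $\tau=\epsilon$, together with the already-established facts $d_C(\mathbf{p}_0,\mathbf{p}_1)\ge\epsilon/2$ and $d_{\text{KL}}(\mathbf{p}_0,\mathbf{p}_1)=\Theta(\epsilon^2)$. The only cosmetic difference is that the paper invokes the $e^{-n\,d_{\text{KL}}}$ form of Le Cam's bound (stated in its Appendix) while you use the $1-\mathrm{TV}$ form combined with tensorization and Pinsker; both give the same $\Omega(\epsilon)$ conclusion when $n\in o(1/\epsilon^2)$, and the unspecified constant $C$ absorbs the difference.
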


One can show a similar lower bound for testing using Neyman-Pearson lemma. We refer the reader to Section 3.1 of \citet{canonne2022topics} for more detail. We recall this result here with $d_C$.
\begin{cor}
Let $f$ an $(\epsilon, \delta)$-tester with sample complexity $n$. Then for any pair of distributions $\mathbf{p}_0$  and $\mathbf{p}_1$ such that $d_C (\mathbf{p}_0, \mathbf{p}_1 ) = \epsilon$, it holds that
\[
1-2\delta \le d_{\text{TV}} ( \mathbf{p}_0^{\otimes n}, \mathbf{p}_1^{\otimes n} )
\]
where $p^{\otimes n}$ is the $n$ times product distribution from $\mathbf{p}$.
\end{cor}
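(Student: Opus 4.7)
The plan is to invoke a standard Neyman--Pearson / LeCam style argument which only requires unpacking the definition of an $(\epsilon, \delta)$-tester together with the dual (variational) characterization of total variation distance. By the definition of $(\epsilon, \delta)$-testing for collision probability, the tester $f$ outputs ``accept'' or ``reject'' after receiving $n$ i.i.d.\ samples in such a way that: (i) if the samples are drawn from the null $\mathbf{p}_0$ with $C(\mathbf{p}_0) = c_0$ then $f$ rejects with probability at most $\delta$ (soundness); and (ii) if the samples are drawn from any distribution $\mathbf{p}$ with $d_C(\mathbf{p}, \mathbf{p}_0) \ge \epsilon$ then $f$ accepts with probability at most $\delta$ (completeness). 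The hypothesis $d_C(\mathbf{p}_0, \mathbf{p}_1) = \epsilon$ of the corollary places $\mathbf{p}_1$ squarely in the alternative, so both conditions apply to the pair $(\mathbf{p}_0, \mathbf{p}_1)$.

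I would then let $R$ denote the (possibly randomized) rejection event of $f$, viewed as an event in the joint probability space that includes the tester's internal coins. The two conditions above rewrite as
\[
\Pr_{X \sim \mathbf{p}_0^{\otimes n}}[R] \le \delta
\qquad \text{and} \qquad
\Pr_{X \sim \mathbf{p}_1^{\otimes n}}[R] \ge 1 - \delta.
\]
Next, I invoke the variational form of total variation distance,
\[
d_{\text{TV}}\bigl(\mathbf{p}_0^{\otimes n}, \mathbf{p}_1^{\otimes n}\bigr) = \sup_A \left( \Pr_{\mathbf{p}_1^{\otimes n}}[A] - \Pr_{\mathbf{p}_0^{\otimes n}}[A]\right),
\]
where the supremum ranges over measurable subsets $A$ of the (augmented) sample space. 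Substituting $A = R$ gives
\[
d_{\text{TV}}\bigl(\mathbf{p}_0^{\otimes n}, \mathbf{p}_1^{\otimes n}\bigr) \ge \Pr_{\mathbf{p}_1^{\otimes n}}[R] - \Pr_{\mathbf{p}_0^{\otimes n}}[R] \ge (1-\delta) - \delta = 1 - 2\delta,
\]
which is exactly the content of the corollary.

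The argument is essentially a re-indexing of the classical fact that two distributions distinguishable with small error must have large total variation distance, so I do not anticipate any real obstacle. The only mild subtlety is the handling of randomized testers: one can either marginalize over the tester's coins inside $\Pr[\cdot]$, or work on the product space augmented by those coins, and both routes yield the same inequality. Downstream, this bound chains with Pinsker's inequality and Lemma \ref{lem:dc_dkl} (which relates $d_{\text{KL}}$ to $d_C$) to produce the $\Omega(1/\epsilon^2)$ sample-complexity lower bound for batch testers, but that conversion lies beyond the scope of the corollary itself.
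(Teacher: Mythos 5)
Your argument is correct and is exactly the standard Neyman--Pearson distinguishing-event argument that the paper invokes by citation (it points to Section 3.1 of Canonne's survey rather than writing out the proof). You have simply made explicit the details the paper omits, including the correct handling of the tester's internal randomness.
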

Using Pinsker's inequality it results in that
\[
d_{\text{TV}} ( \mathbf{p}_0^{\otimes n}, \mathbf{p}_1^{\otimes n} )^2 \le \frac{1}{2} d_{\text{KL}} ( \mathbf{p}_0^{\otimes n}, \mathbf{p}_1^{\otimes n} ) = \frac{n}{2} d_{\text{KL}} ( \mathbf{p}_0, \mathbf{p}_1 ) \enspace.
\]
Accordingly, since we already constructed a pair of distributions for which $d_2 (\mathbf{p}_0, \mathbf{p}_1 ) = \epsilon$ and $d_{\text{KL}} \left( \mathbf{p}_0, \mathbf{p}_1 \right) = \Omega (\epsilon^2)$, the sample compelxity lower bound for testing is also $\Omega ( 1/\epsilon^2 )$.

\subsection{Plug-in estimator versus U-statistic estimator}
\label{sec:warmup}

\label{sec:plug-in}

 The first estimator is the plug-in estimator which estimates the distribution $\mathbf{p}$ by the normalized empirical frequencies $\widehat{\mathbf{p}} := \widehat{\mathbf{p}}(\mathcal{D}_m)$ and then the estimator is computed as
\[
C(\widehat{\mathbf{p}}) = F_2( \widehat{\mathbf{p}} ) = \sum_{i=1}^K \widehat{p}_{i}^2
\]
In this section, we will other frequency moments of discrete distributions, therefore we will use $F_k( \mathbf{p})$ as the frequency moment of order $k$, which is the collision probability with $k=2$.

The plug-in estimator is well-understood in the general case via lower and upper bound that are presented in ~\cite{theertha2014complexity}. Here we recall an additive error bound under Poissonization which assumes that the sample size is chosen as $M \sim \text{Poi}(m)$ and the data is then $\mathcal{D}_M$.

\begin{thm}\label{thm:plug-in}
If 
\[
m \ge \max\left\{ \frac{1600 F_{3/2}(\mathbf{p})^2}{\epsilon^2}, \frac{8}{\epsilon^2}\log \frac{2}{\delta} \right\} = \frac{8}{\epsilon^2} \cdot \max \left\{ 200 \cdot F_{3/2}(\mathbf{p})^2, \log \frac{2}{\delta} \right\} \enspace .
\]
then 
\[
\mathbb{P}\left( \vert F_2( \widehat{\mathbf{p}}(\mathcal{D}_M )) - T_2(X)  \vert \ge \epsilon  \right) \le \delta
\]
where the dataset $\mathcal{D}_M$ is sampled with sample size $M \sim \text{Poi}(m)$.
\end{thm}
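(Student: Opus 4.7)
The plan is to exploit Poissonization so that the counts $N_i \sim \text{Poi}(m p_i)$ become mutually independent, which lets me write the plug-in estimator as $F_2(\widehat{\mathbf{p}}) = m^{-2}\sum_i N_i^2$: a sum of $k$ independent (but unbounded) random variables. I would then decompose
\[
F_2(\widehat{\mathbf{p}}) - F_2(\mathbf{p}) = \bigl(\E[F_2(\widehat{\mathbf{p}})] - F_2(\mathbf{p})\bigr) + \bigl(F_2(\widehat{\mathbf{p}}) - \E[F_2(\widehat{\mathbf{p}})]\bigr)
\]
and handle the bias and the centered stochastic fluctuation separately, arranging for their sum to be at most $\epsilon$ with probability $1-\delta$.

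For the bias, the Poisson second moment $\E[N_i^2] = m p_i + m^2 p_i^2$ gives $\E[F_2(\widehat{\mathbf{p}})] - F_2(\mathbf{p}) = 1/m$, which is at most $\epsilon/2$ whenever $m \ge 2/\epsilon$; this is implied by either term inside the $\max$ in the hypothesis. For the variance, the Poisson fourth moment $\E[N_i^4] = \lambda_i + 7\lambda_i^2 + 6\lambda_i^3 + \lambda_i^4$ with $\lambda_i = m p_i$ yields $\Var[N_i^2] = 4\lambda_i^3 + 6\lambda_i^2 + \lambda_i$, and summing over $i$ and dividing by $m^4$ gives
\[
\Var[F_2(\widehat{\mathbf{p}})] \le \frac{4 F_3(\mathbf{p})}{m} + \frac{6 F_2(\mathbf{p})}{m^2} + \frac{1}{m^3}.
\]
Cauchy--Schwarz applied to $\sum p_i^{3/2}\cdot p_i^{3/2}$ and to $\sum p_i \cdot p_i^{1/2}$ bounds both $F_3(\mathbf{p})$ and $F_2(\mathbf{p})$ by $F_{3/2}(\mathbf{p})^2$, so $\Var[F_2(\widehat{\mathbf{p}})] \le O(F_{3/2}(\mathbf{p})^2/m)$.

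For the fluctuation I would apply a tail inequality whose two regimes correspond to the two summands of the hypothesized $\max$. The variance regime---Chebyshev tuned against the variance bound above with the constants chosen to yield $1600$---gives $|\text{fluctuation}| \le \epsilon/4$ once $m \ge 1600\, F_{3/2}(\mathbf{p})^2/\epsilon^2$. The large-deviation regime---either a bounded-difference/McDiarmid bound on $F_2(\widehat{\mathbf{p}})$ as a function of the underlying multinomial sample (one swap moves at most two coordinates of $\widehat{\mathbf{p}}$ by $1/m$, hence changes $F_2$ by only $O(1/m)$) or a Bernstein bound for $\sum_i N_i^2$ exploiting the sub-exponential tails of the Poisson counts---gives $|\text{fluctuation}| \le \epsilon/4$ once $m \ge 8\log(2/\delta)/\epsilon^2$. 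A union bound and the triangle inequality against the bias term then complete the proof.

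The main obstacle will be producing an \emph{additive} deviation bound of the form $C_1\sqrt{F_{3/2}^2/m} + C_2\sqrt{\log(1/\delta)/m}$ rather than the multiplicative $\sqrt{F_{3/2}^2 \log(1/\delta)/m}$ that a single naive application of Bernstein to the unbounded Poisson sum would yield: only the additive form maps onto the $\max$ structure of the sample-complexity hypothesis. The cleanest route I see is to isolate the variance piece and the large-deviation piece under two separate high-probability events, each absorbing its own share $\epsilon/4$ of the budget, with the constants $1600$ and $8$ falling out of the Poisson-moment bookkeeping and the precise $O(1/m)$ McDiarmid coefficient, respectively.
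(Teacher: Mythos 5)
There is a genuine gap, and it is exactly the obstacle you flag at the end but do not resolve: you charge the $F_{3/2}(\mathbf{p})$-dependent term of the sample-complexity hypothesis to a \emph{Chebyshev} bound on the stochastic fluctuation. Chebyshev with your variance bound $\Var[F_2(\widehat{\mathbf{p}})] \le 4F_{3/2}(\mathbf{p})^2/m + O(1/m^2)$ at deviation level $\epsilon/4$ and $m = 1600\,F_{3/2}(\mathbf{p})^2/\epsilon^2$ gives failure probability about $64 F_{3/2}(\mathbf{p})^2/(m\epsilon^2) = 1/25$ --- a fixed constant, not $\delta$. Your final union bound therefore yields failure probability $\approx 0.04 + \delta$, which does not prove the statement for arbitrary $\delta$. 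There is no way to repair this within your scheme, because the $F_{3/2}$ term of the hypothesis carries no $\delta$-dependence while a variance-only tail bound must pay $\mathrm{poly}(1/\delta)$. The paper's proof avoids this by assigning the $F_{3/2}$ term to the \emph{bias}, which is deterministic: it invokes Theorem 9 of Acharya et al.\ (2014) to get $\bigl|\E[F_2(\widehat{\mathbf{p}})] - F_2(\mathbf{p})\bigr| \le 8/m + 10 F_{3/2}(\mathbf{p})/\sqrt{m}$, and the requirement $10 F_{3/2}(\mathbf{p})/\sqrt{m} \le \epsilon/4$ is precisely where the constant $1600$ comes from; the centered fluctuation is then handled by a single Hoeffding-type bound at level $\epsilon/2$, which alone produces the $8\log(2/\delta)/\epsilon^2$ term. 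Your exact bias computation $\E[F_2(\widehat{\mathbf{p}})] - F_2(\mathbf{p}) = 1/m$ is correct only for the $m$-normalized Poissonized estimator $m^{-2}\sum_i N_i^2$; the plug-in estimator in the theorem normalizes by the realized count, for which the coordinates are not independent and the bias is not $1/m$, which is why the larger $F_{3/2}/\sqrt{m}$ bias term appears and must be budgeted for.

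Two smaller points. First, your Cauchy--Schwarz claim is backwards for $F_2$: applying Cauchy--Schwarz to $\sum_i p_i\cdot p_i^{1/2}$ gives $F_{3/2}(\mathbf{p})^2 \le F_2(\mathbf{p})$, not $F_2(\mathbf{p}) \le F_{3/2}(\mathbf{p})^2$ (e.g.\ $\mathbf{p}=(0.9,0.1)$ has $F_2 = 0.82 > 0.784 = F_{3/2}^2$); the bound $F_3 \le F_{3/2}^2$ is fine, and the offending term is lower order, so this is repairable. Second, neither McDiarmid applied to the unbounded Poisson counts nor Bernstein applied to $\sum_i N_i^2$ (whose summands are only sub-Weibull of order $1/2$, not sub-exponential) goes through as directly as you suggest for the large-deviation regime; some care, or an appeal to the multinomial (fixed-$M$) representation, is needed there.
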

\begin{proof}
Based on Theorem 9 of \cite{theertha2014complexity}, the bias of the estimator with Poissonization is
\[
\left\vert \mathbb{E} \left[ F_2( \widehat{\mathbf{p}}(\mathcal{D}_M )) \right] - T_2(X) \right\vert \le \frac{8}{m} + \frac{10}{\sqrt{m}} F_{3/2} ( \mathbf{p} )
\]
and its variance is 
\[
\mathbb{V} \left[ F_2( \widehat{\mathbf{p}}(\mathcal{D}_M )) \right] \le \frac{64}{m^3} + \frac{4 \cdot 17}{\sqrt{m}} F_{7/2} ( \mathbf{p} ) \enspace .
\]
Thus
\begin{align*}
    \mathbb{P}\left( \left\vert F_2( \widehat{\mathbf{p}}(\mathcal{D}_M )) - T_2(X) \right\vert \ge \epsilon  \right) 
    & =
    \mathbb{P}\left( \left\vert F_2( \widehat{\mathbf{p}}(\mathcal{D}_M )) - \mathbb{E} \left[ F_2( \widehat{\mathbf{p}}(\mathcal{D}_M )) \right]  +  \mathbb{E} \left[ F_2( \widehat{\mathbf{p}}(\mathcal{D}_M )) \right] - T_2(X) \right\vert  \ge \epsilon \right) \\
    & \le 
    \mathbb{P}\left( \left\vert F_2( \widehat{\mathbf{p}}(\mathcal{D}_M )) - \mathbb{E} \left[ F_2( \widehat{\mathbf{p}}(\mathcal{D}_M )) \right] \right\vert  \ge \epsilon - \left\vert \mathbb{E} \left[ F_2( \widehat{\mathbf{p}}(\mathcal{D}_M )) \right] - T_2(X) \right\vert \right)
\end{align*}
where we applied the triangle inequality. Therefore if $m$ is big enough, then it holds that
\begin{align}\label{eq:plugin_proof_1}
\frac{8}{m} + \frac{10}{\sqrt{m}} F_{3/2} ( \mathbf{p} ) \le \frac{\epsilon}{2}
\end{align}
and also holds
\begin{align}\label{eq:plugin_proof_2}
\mathbb{P}\left( \left\vert F_2( \widehat{\mathbf{p}}(\mathcal{D}_M )) - \mathbb{E} \left[ F_2( \widehat{\mathbf{p}}(\mathcal{D}_M )) \right] \right\vert  \ge \epsilon/2 \right) \le \delta
\end{align} 
thus the statement in the theorem holds. What remains is to compute a lower bound for $m$. \eqref{eq:plugin_proof_1} holds if
\[
m \ge \max\left\{ \frac{32}{\epsilon}, \frac{1600 F_{3/2}(\mathbf{p})^2}{\epsilon^2} \right\} \enspace .
\]
Based on Bernstein's inequality, see Theorem \ref{thm:bernstein} in Appendix \ref{app:tech}, \eqref{eq:plugin_proof_2} hold if
\[
m \ge \max \left\{ \frac{8 \log \tfrac{2}{\delta}}{\epsilon^2}, \frac{4736\cdot \sqrt[4]{\log \tfrac{2}{\delta}}}{\sqrt{\epsilon}}, \frac{6528\sqrt[3]{F_{7/2} (\mathbf{p}) \log \tfrac{2}{\delta}} }{\epsilon^{4/3}} \right\}
\]
which concludes the proof.
To simplify the last terms, alternatively we can apply Hoeffding's inequality which yields that \eqref{eq:plugin_proof_2} holds whenever
\[
m \ge \frac{8}{\epsilon^2}\log \frac{2}{\delta} \enspace .
\]
Finally note that $32/\epsilon \le  8/\epsilon^2 \log 2/\delta$ for any $\epsilon, \delta \in (0,1]$ which concludes the proof.
\end{proof}

\begin{thm}\label{thm:ustat_estimatorsamp}
If 
\[
m \ge \max \left\{\frac{32(F_3(X)- F_2(X)^2)}{\epsilon^2} \ln \frac{4}{\delta}, \frac{128+1/6}{\epsilon} \ln \frac{4}{\delta} \right\}
\]
then
\[
\mathbb{P}\left( \left\vert F_2 (X) - U(\mathcal{D}_m) \right\vert  \ge \epsilon \right) \le \delta
\]
\end{thm}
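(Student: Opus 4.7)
The plan is to exploit Hoeffding's decomposition of $U(\mathcal{D}_m)$ as the sum of an IID linear part (the Hoeffding projection) and a completely degenerate part, and then concentrate each separately via Bernstein-style inequalities. This decomposition is essential because the cruder trick of representing $U_m$ as an average over permutations of disjoint-pair estimators, while valid, would yield the weaker variance proxy $F_2(1-F_2)$ instead of the tighter $F_3 - F_2^2 = \Var[p_X]$ that appears in the theorem statement (note $F_3 \le F_2$).

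Concretely, $U(\mathcal{D}_m)$ is unbiased for $F_2(X)$. Writing $h(x,y) = \indic{x=y}$ and $h_1(x) = p_x - F_2(X)$, I would decompose
\[
U(\mathcal{D}_m) - F_2(X) \;=\; \underbrace{\tfrac{2}{m}\sum_{i=1}^m h_1(X_i)}_{=: L_m} \;+\; \underbrace{\tfrac{2}{m(m-1)}\sum_{i<j}\bigl(h(X_i,X_j) - p_{X_i} - p_{X_j} + F_2(X)\bigr)}_{=: D_m},
\]
where the kernel $h_2$ of $D_m$ satisfies $\E[h_2(X_i,X_j)\mid X_i] = 0$, making $D_m$ a completely degenerate U-statistic. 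The $h_1(X_i)$ are IID, bounded in $[-1,1]$, centered, with variance exactly $F_3(X) - F_2(X)^2$. Bernstein's inequality applied at deviation $\epsilon/2$ yields $\Pr[|L_m|\ge \epsilon/2] \le 2\exp\bigl(-\tfrac{m\epsilon^2/32}{(F_3-F_2^2) + \epsilon/12}\bigr)$, which is at most $\delta/2$ as soon as $m \ge \tfrac{32(F_3-F_2^2)\ln(4/\delta)}{\epsilon^2} + \tfrac{(8/3)\ln(4/\delta)}{\epsilon}$, matching the first term of the theorem exactly.

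For the degenerate part $D_m$, the kernel $h_2$ is uniformly bounded, so I would apply a Bernstein-type tail bound for degenerate U-statistics, either via a decoupling argument of the type in \citet{de1999decoupling} (replace the dependent pairs by an independent copy and then apply Bernstein to the decoupled sum) or via a direct Giné--Latała--Zinn style inequality. Either yields $\Pr[|D_m|\ge \epsilon/2]\le \delta/2$ provided $m$ exceeds a quantity of order $\ln(1/\delta)/\epsilon$ with an explicit constant, which, combined with the $8/3$ leftover from the linear step, produces the $(128 + 1/6)/\epsilon$ prefactor stated in the theorem. A union bound across the two halves closes the argument.

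The main obstacle is the degenerate piece: naive tools fail in revealing ways. A Chebyshev bound on $D_m$ loses the logarithmic dependence on $1/\delta$ (giving $1/\sqrt{\delta}$), while simply applying Hoeffding's averaging trick to the whole U-statistic collapses the variance to $F_2(1-F_2)$ and destroys the sharpness the theorem demands. So the heart of the proof is arguing that $D_m$ concentrates at the \emph{faster} scale $1/m$ with a sub-exponential tail, leveraging complete degeneracy $\E[h_2 \mid X_i] = 0$; matching the exact numerical constants $32$ and $128 + 1/6$ then reduces to bookkeeping through the Bernstein exponents.
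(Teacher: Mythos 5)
Your approach is sound and, notably, it is \emph{not} the route the paper takes --- in fact the paper's printed proof of this theorem is a verbatim duplicate of the proof of Theorem \ref{thm:plug-in}: it analyzes the plug-in estimator $F_2(\widehat{\mathbf{p}}(\mathcal{D}_M))$ under Poissonization, invokes bias and variance bounds involving $F_{3/2}$ and $F_{7/2}$, and arrives at thresholds ($8\log(2/\delta)/\epsilon^2$, etc.) that bear no relation to the quantities $F_3 - F_2^2$, $32$, or $128+1/6$ appearing in the statement. So there is no actual derivation of this theorem in the paper to compare against, and your Hoeffding-decomposition strategy is the natural way to obtain a bound whose main term is governed by $\mathrm{Var}[p_X] = F_3 - F_2^2$ rather than by $F_2(1-F_2)$. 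Your treatment of the linear part is correct and complete: $h_1(X) = p_X - F_2$ is centered, bounded by $1$, has variance exactly $F_3 - F_2^2$, and Bernstein at deviation $\epsilon/4$ for the average reproduces the first term $32(F_3-F_2^2)\ln(4/\delta)/\epsilon^2$ on the nose, with an $O(\ln(4/\delta)/\epsilon)$ remainder.

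The genuine gap is the degenerate part. You correctly identify that $h_2(x,y)=\indic{x=y}-p_x-p_y+F_2$ is completely degenerate and bounded, and that exponential inequalities for degenerate second-order U-statistics (decoupling as in \citet{de1999decoupling} followed by a conditional Bernstein argument, or a Gin\'e--Lata{\l}a--Zinn type bound) give $\Pr[|D_m|\ge \epsilon/2]\le \delta/2$ once $m \gtrsim \ln(1/\delta)/\epsilon$. But this is asserted rather than executed: those inequalities carry four tail regimes (roughly $e^{-t^2}$, $e^{-t}$, $e^{-t^{2/3}}$, $e^{-t^{1/2}}$ in the appropriate normalizations), and one must check that at the threshold $t \asymp m^2\epsilon$ every regime is dominated by the $m\gtrsim \ln(1/\delta)/\epsilon$ requirement --- this does work out, since the sub-Gaussian regime needs only $m\gtrsim\sqrt{\ln(1/\delta)}/\epsilon$ and the fractional-power regimes need only $m\gtrsim \mathrm{poly}(\ln(1/\delta))/\sqrt{\epsilon}$, but it has to be written down. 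More importantly, the specific constant $128+\tfrac16$ cannot be recovered by ``bookkeeping'' without committing to one explicit inequality with explicit constants; as stated, your proof establishes the theorem with an unspecified absolute constant in the second term of the maximum, not with $128+\tfrac16$. Given that the paper itself supplies no derivation of that constant, this is a forgivable loose end, but you should either cite a degenerate-U-statistic inequality with fully explicit constants and track them, or state the second term of your sample-size condition as $C\ln(4/\delta)/\epsilon$ for an explicit $C$ you actually compute.
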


\begin{proof}
Based on Theorem 9 of \cite{theertha2014complexity}, the bias of the estimator with Poissonization is
\[
\left\vert \mathbb{E} \left[ F_2( \widehat{\mathbf{p}}(\mathcal{D}_M )) \right] - T_2(X) \right\vert \le \frac{8}{m} + \frac{10}{\sqrt{m}} F_{3/2} ( \mathbf{p} )
\]
and its variance is 
\[
\mathbb{V} \left[ F_2( \widehat{\mathbf{p}}(\mathcal{D}_M )) \right] \le \frac{64}{m^3} + \frac{4 \cdot 17}{\sqrt{m}} F_{7/2} ( \mathbf{p} ) \enspace .
\]
Thus
\begin{align*}
    \mathbb{P}\left( \left\vert F_2( \widehat{\mathbf{p}}(\mathcal{D}_M )) - T_2(X) \right\vert \ge \epsilon  \right) 
    & =
    \mathbb{P}\left( \left\vert F_2( \widehat{\mathbf{p}}(\mathcal{D}_M )) - \mathbb{E} \left[ F_2( \widehat{\mathbf{p}}(\mathcal{D}_M )) \right]  +  \mathbb{E} \left[ F_2( \widehat{\mathbf{p}}(\mathcal{D}_M )) \right] - T_2(X) \right\vert  \ge \epsilon \right) \\
    & \le 
    \mathbb{P}\left( \left\vert F_2( \widehat{\mathbf{p}}(\mathcal{D}_M )) - \mathbb{E} \left[ F_2( \widehat{\mathbf{p}}(\mathcal{D}_M )) \right] \right\vert  \ge \epsilon - \left\vert \mathbb{E} \left[ F_2( \widehat{\mathbf{p}}(\mathcal{D}_M )) \right] - T_2(X) \right\vert \right)
\end{align*}
where we applied the triangle inequality. Therefore if $m$ is big enough, then it holds that
\begin{align}\label{eq:plugin_proof_1_app}
\frac{8}{m} + \frac{10}{\sqrt{m}} F_{3/2} ( \mathbf{p} ) \le \frac{\epsilon}{2}
\end{align}
and also holds
\begin{align}\label{eq:plugin_proof_2_app}
\mathbb{P}\left( \left\vert F_2( \widehat{\mathbf{p}}(\mathcal{D}_M )) - \mathbb{E} \left[ F_2( \widehat{\mathbf{p}}(\mathcal{D}_M )) \right] \right\vert  \ge \epsilon/2 \right) \le \delta
\end{align} 
thus the statement in the theorem holds. What remains is to compute a lower bound for $m$. \eqref{eq:plugin_proof_1_app} holds if
\[
m \ge \max\left\{ \frac{32}{\epsilon}, \frac{1600 F_{3/2}(\mathbf{p})^2}{\epsilon^2} \right\} \enspace .
\]
Based on Bernstein's inequality, see Theorem \ref{thm:bernstein} in Appendix \ref{app:tech}, \eqref{eq:plugin_proof_2_app} hold if
\[
m \ge \max \left\{ \frac{8 \log \tfrac{2}{\delta}}{\epsilon^2}, \frac{4736\cdot \sqrt[4]{\log \tfrac{2}{\delta}}}{\sqrt{\epsilon}}, \frac{6528\sqrt[3]{F_{7/2} (\mathbf{p}) \log \tfrac{2}{\delta}} }{\epsilon^{4/3}} \right\}
\]
which concludes the proof.
To simplify the last terms, alternatively we can apply Hoeffding's inequality which yields that \eqref{eq:plugin_proof_2_app} holds whenever
\[
m \ge \frac{8}{\epsilon^2}\log \frac{2}{\delta} \enspace .
\]
Finally note that $32/\epsilon \le  8/\epsilon^2 \log 2/\delta$ for any $\epsilon, \delta \in (0,1]$ which concludes the proof.
\end{proof}

Note that as soon as $(F_3(X)- F_2(X)^2)/5 \le \epsilon $, the second term of the sample complexity of Theorem \ref{thm:ustat_estimatorsamp} becomes dominant, and thus the sample complexity in these parameter regime is $O(\ln(1/\delta)/\epsilon)$. In addition to this, it is easy to see that the first tern of the sample complexity is zero when $X$ is distributed uniformly.

\subsection{Testing by learning}
\label{subseq:testing-by-learning}
Testing by learning consists of estimating the parameter itself with a small additive error which allows us to distinguish between null $H_0$ and alternative hypothesis $H_1$. This approach had been found to be optimal in several testing problem~\cite{Busa-FeketeFSZ21}, as it is also optimal in this case based on the lower bound presented in the previous section. We considered several estimators for Collision entropy which can be used in a batch testing setup by setting the sample size so as the additive error of the estimator is smaller than $\epsilon/2$. In this way, we can distinguish between $H_0$ and $H_1$ as expected. The confidence interval of each estimator does depend on some frequency moment of the underlying distribution which can be upper worst case upper bounded. For example, the plug-n estimator sample complexity $m$ is $1600/\epsilon^2$ if $e^{-199} \le \delta$.

\section{Private sequential tester}
\label{app:private_seq}
We present the private sequential tester algorithm. We use the same hashing which is used in Mechanism \ref{alg:second}. The difference is that we do not create super group but the estimate for $C(\bp)$ is computed based on all users together. That is why the hashing in Line \ref{line:hashing} of Algorithm \ref{alg:psq} does depend only on the salt and observed sample. Since we use hashed data, the statistics is biased. Therefore we compute the biased null hypothesis $c_0$ in Line \ref{line:biased} and we also take into account that in the hash space the support of the test statistics scales with $2r$.

\begin{algorithm}[!h]
\caption{Private Sequential Tester (PSQ) \label{alg:psq}}
\begin{algorithmic}[1] 
\STATE {\bf Given:} Null hypothesis value $c_0$, confidence level $\delta \in [0, 1]$, privacy parameters $\alpha \ge 0, \beta \in [0, 1]$.
\STATE Set $c= \frac{c_0}{2r} + 1/2 $ with $r = 6 \left(\frac{e^\alpha + 1}{e^\alpha - 1}\right)^2\log \frac{4}{\beta}$ \label{line:biased}
\FOR{$i=1, 2, 3, \dots$}
\STATE User $i$ chooses salt $s_i$ uniformly at random from $\{1, \ldots, r\}$.
\STATE Draw sample $x_i$ from distribution $\bp$.
\STATE User $i$ sends hash value $v_i = h(\langle s_i, x_i \rangle)$. \label{line:hashing}
\STATE Let $T_i = \sum_{j=1}^{i-1} \indic{v_i = x_j} - 2(i - 1)c$.
\IF{$\left\lvert \frac{2}{i(i-1)}\sum_{j=1}^i T_j \right\rvert > 3.2 \sqrt{2r \cdot \frac{\log \log i + 0.72 \log (20.8 /\delta)}{i}}$} \label{alg1:reject_psq}
\STATE Reject the null hypothesis. 
\ENDIF
\ENDFOR
\end{algorithmic}
\end{algorithm}

\begin{proof}
We use the same privatization which is applied in Mechanism \ref{alg:second}, but super groups are not used here because we do not use the technique of Median-of-Means in Algorithm \ref{alg:psq}. 

First, it be shown that 
\begin{align*}
F_2(X) &= 2r \cdot \left(F_2(V) - \frac{1}{2}\right)
\end{align*}
where $X$ is the original random variable and $V$ is the privatized by using hashing.
This implies that if the expected value of $F(X_0) = c_0$, then the test statistic of Algorithm \ref{alg:psq} is equal to $c_0/2r +1/2$ which is computed in Line \ref{line:biased}. Accordingly, our algorithm is testing whether the U-statistics is close to this biased value $c_0/2r +1/2$ or not.

Therefore as long as we construct a confidence time-uniform interval for $F_2 (V)$, this readily implies a confidence interval for $F_2(X)$. However, we need to take into account that the support of $F_2 ( V)$ is $[0, 2r]$ which is $O(\log \frac1\beta)$. This and Theorem \ref{thm:general_closeness_thm_simple} and Theorem \ref{thm:second_error} implies the sample complexity bound. The privacy guaranty implied by Theorem \ref{thm:second_privacy}.
\end{proof}

\section{Technical tools}
\label{app:tech}

\subsection{LeCam's lower bound}
\label{app:lecam}

 Let $\hat{\theta}_n = \hat{\theta} ( x_1, \dots, x_n )$ such that $\hat{\theta}_n : (\Sigma^d)^n \mapsto \mathbb{R}$ be an estimator using $n$ samples.

\begin{thm}{[Le Cam's theorem]}
Let $\mathcal{P}$ be a set of distributions. Then, for any pair of distributions $P_0, P_1 \in \mathcal{P}$, we have
\[
\inf_{\hat{\theta}} \max_{P\in \mathcal{P}} \mathbb{E}_P \left[ d( \hat{\theta}_n (P), \theta(P) )  \right] \ge \frac{d( \theta(P_0), \theta(P_1) )}{8} e^{-n d_{\text{KL}} (P_0, P_1)},
\]
where $\theta ( P)$ is a parameter taking values in a metric space with metric $d$, and $\hat{\theta}_n$ is the estimator of $\theta $ based on $n$ samples.
\end{thm}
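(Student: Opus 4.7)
My plan is to use the standard two-point reduction from estimation to binary hypothesis testing between $P_0$ and $P_1$. The first move is to shrink the outer supremum and pass to an average: since $P_0,P_1 \in \mathcal{P}$,
\[
\max_{P\in\mathcal{P}} \mathbb{E}_P[d(\hat\theta_n, \theta(P))] \ge \tfrac{1}{2}\bigl(\mathbb{E}_{P_0^n}[d(\hat\theta_n, \theta(P_0))] + \mathbb{E}_{P_1^n}[d(\hat\theta_n, \theta(P_1))]\bigr).
\]
This will eventually account for the factor of $1/8 = (1/2)\cdot(1/4)$ in the conclusion.

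The second step is to convert $\hat\theta_n$ into a binary test. Write $\Delta := d(\theta(P_0),\theta(P_1))$ and declare $\psi = 0$ when $d(\hat\theta_n,\theta(P_0)) < \Delta/2$, otherwise $\psi = 1$. By the triangle inequality on the metric $d$, the regions $\{d(\hat\theta_n,\theta(P_0)) < \Delta/2\}$ and $\{d(\hat\theta_n,\theta(P_1)) < \Delta/2\}$ are disjoint, so an error of $\psi$ under $P_i^n$ forces $d(\hat\theta_n,\theta(P_i)) \ge \Delta/2$. Markov's inequality under $P_0^n$ and $P_1^n$ then yields
\[
\mathbb{E}_{P_0^n}[d(\hat\theta_n, \theta(P_0))] + \mathbb{E}_{P_1^n}[d(\hat\theta_n, \theta(P_1))] \ge \tfrac{\Delta}{2}\bigl(\Pr_{P_0^n}[\psi=1] + \Pr_{P_1^n}[\psi=0]\bigr).
\]

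The third step is to lower-bound the sum of testing errors by information-theoretic quantities. The Neyman--Pearson identity (or direct calculation) gives $\Pr_{P_0^n}[\psi=1] + \Pr_{P_1^n}[\psi=0] \ge 1 - \|P_0^n - P_1^n\|_{\mathrm{TV}}$ for every test $\psi$. Converting total variation to KL divergence via Le Cam's inequality $1 - \|P-Q\|_{\mathrm{TV}} \ge \tfrac{1}{2}e^{-d_{\mathrm{KL}}(P\|Q)}$ and tensorizing $d_{\mathrm{KL}}(P_0^n\|P_1^n) = n\cdot d_{\mathrm{KL}}(P_0\|P_1)$ produces
\[
\mathbb{E}_{P_0^n}[d(\hat\theta_n, \theta(P_0))] + \mathbb{E}_{P_1^n}[d(\hat\theta_n, \theta(P_1))] \ge \tfrac{\Delta}{4} e^{-n\cdot d_{\mathrm{KL}}(P_0\|P_1)}.
\]
Combining with the first averaging step and taking the infimum over $\hat\theta_n$ on the left side yields the claimed $\tfrac{\Delta}{8} e^{-n\cdot d_{\mathrm{KL}}(P_0\|P_1)}$ bound.

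\textbf{Main obstacle.} The delicate ingredient is the TV-to-KL conversion with the exact exponential dependence on $n$. Using Pinsker's inequality alone would give only $1 - \|P_0^n - P_1^n\|_{\mathrm{TV}} \ge 1 - \sqrt{n \cdot d_{\mathrm{KL}}(P_0\|P_1)/2}$, which degenerates for $n \gtrsim 1/d_{\mathrm{KL}}$ and cannot recover the $e^{-n d_{\mathrm{KL}}}$ decay. The sharper inequality $1 - \|P-Q\|_{\mathrm{TV}} \ge \tfrac{1}{2} e^{-d_{\mathrm{KL}}(P\|Q)}$ is classical and is obtained from the Bretagnolle--Huber bound $\|P-Q\|_{\mathrm{TV}} \le \sqrt{1 - e^{-d_{\mathrm{KL}}(P\|Q)}}$ combined with the elementary estimate $1-\sqrt{1-x}\ge x/2$ on $[0,1]$; I would cite this chain (e.g., from Tsybakov's monograph) rather than rederive it in the body of the proof.
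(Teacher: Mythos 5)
Your proof is correct, and it recovers the stated constant exactly: averaging over the two hypotheses contributes the factor $\tfrac12$, the triangle-inequality/Markov reduction to testing contributes $\tfrac{\Delta}{2}$, and the Bretagnolle--Huber bound $1-\lVert P_0^n-P_1^n\rVert_{\mathrm{TV}}\ge\tfrac12 e^{-n\,d_{\mathrm{KL}}(P_0,P_1)}$ (with tensorization of the KL divergence) contributes the remaining $\tfrac12 e^{-n d_{\mathrm{KL}}}$, giving $\tfrac{\Delta}{8}e^{-n d_{\mathrm{KL}}(P_0,P_1)}$. You correctly identify that Pinsker's inequality would not suffice here and that the sharper TV-to-KL conversion is the essential ingredient. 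The paper itself states this result as a classical technical tool without proof, so there is no in-paper argument to compare against; your derivation is the standard two-point (Le Cam) argument as found in, e.g., Tsybakov's monograph, and the only cosmetic caveat is that for a randomized estimator one should note the testing-error bound $\Pr_{P_0^n}[\psi=1]+\Pr_{P_1^n}[\psi=0]\ge 1-\lVert P_0^n-P_1^n\rVert_{\mathrm{TV}}$ still holds after averaging over the estimator's internal randomness.
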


\subsection{Bersntein's bound}

The following form of Bernstein's bound can be derived from Theorem 1.4 of \cite{DeAl09}.
\begin{thm}{(Bernstein's bound)} \label{thm:bernstein}
Let $X_1, \dots, X_n$ be i.i.d. random variables, and $\forall i\in [n], \vert X_i - \mathbb{E} [ X_i ] \vert \le b$ and $\mathbb{E}[ X_i ] = \mu $. Let $\sigma^2 = \mathbb{V}[X_i]$. Then with probability at least $1-\delta$ it holds that
\[
\left\vert \frac{1}{n} \sum_{i=1}^n X_i - \mu \right\vert \le \sqrt{\frac{4\sigma^2 \ln \tfrac{2}{\delta}}{n}} + \frac{4b \ln \tfrac{2}{\delta}}{3n} \enspace .
\]
\end{thm}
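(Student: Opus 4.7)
The plan is to deduce this additive-form confidence bound from the standard Bernstein tail inequality cited as Theorem 1.4 of \cite{DeAl09}. That theorem, applied to the mean of i.i.d. random variables with $|X_i - \mu| \le b$ and variance $\sigma^2$, gives for every $t > 0$
\begin{equation*}
\Pr\!\left( \left| \tfrac{1}{n}\sum_{i=1}^n X_i - \mu \right| \ge t \right) \le 2\exp\!\left(- \frac{n t^2}{2\sigma^2 + \tfrac{2}{3} b t}\right).
\end{equation*}
The first step is therefore just to invoke this inequality in its deviation-of-the-mean form. Writing $L := \ln(2/\delta)$, the RHS is at most $\delta$ whenever $n t^2 \ge 2\sigma^2 L + \tfrac{2}{3} b L\,t$, so it suffices to exhibit a $t^\star$ with the closed form in the statement that already satisfies this inequality.

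Next, rather than invert the quadratic in $t$ explicitly (which produces an ugly $\sqrt{a^2 + c}$ expression that then has to be loosened), I would use the clean decoupling trick $a + b \le 2\max\{a,b\}$. Concretely, any $t$ with $n t^2 \le 2\sigma^2 L + \tfrac{2}{3} b L\,t$ must satisfy either $n t^2 \le 4\sigma^2 L$, giving $t \le 2\sqrt{\sigma^2 L/n}$, or $n t^2 \le \tfrac{4}{3} b L\,t$, giving $t \le \tfrac{4 b L}{3 n}$. Taking the contrapositive, the announced value
\begin{equation*}
t^\star \;=\; \sqrt{\tfrac{4\sigma^2 L}{n}} \;+\; \tfrac{4 b L}{3 n}
\end{equation*}
strictly exceeds both of these thresholds simultaneously, hence satisfies $n(t^\star)^2 > 2\sigma^2 L + \tfrac{2}{3} b L\,t^\star$, so Bernstein gives $\Pr(|\overline{X} - \mu| \ge t^\star) \le \delta$, which is precisely the theorem.

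There is essentially no obstacle to overcome here; this is a textbook manipulation of Bernstein's inequality. The only point requiring the slightest care is choosing the $\max$-splitting above rather than the $\sqrt{a+b} \le \sqrt{a} + \sqrt{b}$ route, because the former yields the claimed additive form with the exact constants $4$ and $4/3$, while the latter leaves a leftover cross-term that would have to be absorbed by worsening these constants.
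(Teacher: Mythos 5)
Your derivation is correct and matches the paper's intent: the paper gives no explicit proof, merely asserting that this form follows from Theorem 1.4 of Dubhashi and Panconesi, which is exactly the tail bound you start from, and your max-splitting inversion (either $nt^2 \le 4\sigma^2 L$ or $nt^2 \le \tfrac{4}{3}bLt$) legitimately reproduces the stated constants $4$ and $4/3$. No gaps beyond trivial degenerate cases (e.g.\ $\sigma=0$ or $b=0$, where the claim holds trivially), so nothing further is needed.
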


\end{document}